\def\eqref#1{equation~\ref{#1}}
\def\1{\bm{1}}
\DeclareMathAlphabet{\mathsfit}{\encodingdefault}{\sfdefault}{m}{sl}
\SetMathAlphabet{\mathsfit}{bold}{\encodingdefault}{\sfdefault}{bx}{n}
\definecolor{grey}{rgb}{0.33, 0.33, 0.33}
\newcommand{\squishlist}{
\begin{list}{{{\small{$\bullet$}}}}
{\setlength{\itemsep}{1pt}      \setlength{\parsep}{0pt}
\setlength{\topsep}{-2pt}       \setlength{\partopsep}{0pt}
\setlength{\leftmargin}{1em} \setlength{\labelwidth}{1em}
\setlength{\labelsep}{0.5em} } }
\newcommand{\squishend}{  \end{list}  }
\renewcommand*\env@matrix[1][*\c@MaxMatrixCols c]{%
  \hskip -\arraycolsep
  \let\@ifnextchar\new@ifnextchar
  \array{#1}}
\newtcolorbox{mybox}{colback=gray!10!white,colframe=gray!10!white,left=1mm,top=1mm,right=1mm,boxsep=0mm,width=13cm,before=\par\smallskip\centering,after=\par,
height=1cm}
\newtheorem{assumption}{Assumption}
\newtheorem{definition}{Definition}
\newtheorem{theorem}{Theorem}
\newtheorem{lemma}[theorem]{Lemma}
\newtheorem{remark}{Remark}
\newtheorem{example}{Example}
\title{Incentivizing High-quality Participation From Federated Learning Agents}
\author{
Jinlong Pang$^1$
\and
Jiaheng Wei$^2$\and
Yifan Hua$^{1}$\and
Chen Qian$^1$\and
Yang Liu$^1$\\
\affiliations
$^1$University of California, Santa Cruz\\
$^2$The Hong Kong University of Science and Technology (Guangzhou)\\
\emails
\{jpang14, yhua294, cqian12, yangliu\}@ucsc.edu,
jiahengwei@hkust-gz.edu.cn
}
\begin{document}

\maketitle

\begin{abstract}
Federated learning (FL) provides a promising paradigm for facilitating collaboration between multiple clients that jointly learn a global model without directly sharing their local data. However, existing research suffers from two caveats: 1) From the perspective of agents, voluntary and unselfish participation is often assumed. But self-interested agents may opt out of the system or provide low-quality contributions without proper incentives; 2) From the mechanism designer's perspective, the aggregated models can be unsatisfactory as the existing game-theoretical federated learning approach for data collection ignores the potential heterogeneous effort caused by contributed data. 
To alleviate above challenges, we propose an incentive-aware framework for agent participation that considers data heterogeneity to accelerate the convergence process. Specifically, we first introduce the notion of Wasserstein distance to explicitly illustrate the heterogeneous effort and reformulate the existing upper bound of convergence. To induce truthful reporting from agents, we analyze and measure the generalization error gap of any two agents by leveraging the peer prediction mechanism to develop score functions. We further present a two-stage Stackelberg game model that formalizes the process and examines the existence of equilibrium. Extensive experiments on real-world datasets demonstrate the effectiveness of our proposed mechanism.
\end{abstract}

\section{Introduction}
\label{intro}
\setcounter{section}{1}

Real-world datasets are often spread across multiple locations, and privacy concerns can prevent centralizing this data for training. Federated Learning (FL)  provides a promising paradigm for facilitating collaboration among multiple agents (or clients) to jointly learn a global model without sharing their local data. In the standard FL framework, each agent trains a model using its own data, and a central server aggregates these local models into a global one.
Despite the rapid growth and success of FL in improving speed, efficiency, and accuracy \citep{li2019convergence, yang2021achieving}, many studies assume, often unrealistically, that agents will voluntarily spend their resources on collecting their local data and training models to help the central server to refine the global model. 
In practice, without proper incentives, self-interested agents may opt out of contributing their resources to the system or provide low-quality models \citep{liu2020incentives,blum2021one, wei2021sample, karimireddy2022mechanisms}.

Recent literature has observed efforts to incentivize FL agents to contribute model training using sufficient local data \citep{donahue2021optimality,donahue2021model,hasan2021incentive,cui2022collaboration,cho2022federate}. One \textbf{common limitation} of these studies is 
that they measure each agent's contribution simply by the sample size used for training the uploaded model and incentivize agents to use more data samples. However, relying solely on 
the sample size to measure contribution may not accurately capture the quality of data samples. This is because aggregating local models trained from large data does not always lead to fast convergence or highly accurate global models if those data are biased \citep{mcmahan2017communication, zhao2018federated}. In practice, local data on different agents are heterogeneous, also referred to as non-independently and identically distributed (non-iid). Hence, one appropriate incentivized method should consider not just the sample size but also the potential data heterogeneity.  Such a solution is necessary for developing incentive mechanisms in practical FL systems but is currently missing from existing work \citep{blum2021one,liu2020incentives, karimireddy2022mechanisms, zhou2021truthful, donahue2021model, donahue2021optimality, yu2020fairness,cho2022federate}. 
In this work, we seek to answer the following pertinent question: 
\textit{How do we incentivize agents (local clients) to
provide models trained on high-quality data that enable fast convergence on a highly accurate global model?} 

To answer this, we propose an incentive-aware FL framework that considers data heterogeneity to accelerate the convergence process. Our main contributions are as follows:
\squishlist

\item We propose to use Wasserstein distance to quantify the non-iid degree of the agent's local data. We prove that Wasserstein distance is an important factor in the convergence bound of FL with heterogeneous data (Theorem \ref{lemma: bound_divergence}).

\item We present a two-stage Stackelberg game model for the incentive-aware framework, and prove the existence of equilibrium (Theorem \ref{theorem:Existence_of_pure_Nash_equilibrium}). In particular, to induce truthful reporting, we design scoring (reward) functions to reward agents using a well-tailored metric to quantify the generalization loss gap of any two agents 
(Theorem \ref{thm:performance_gap_test}).

\item 
Experiments on real-world datasets demonstrate the effectiveness of the proposed mechanism in terms of its ability to incentivize improvement. We further illustrate the existing equilibrium that no one can unilaterally reduce his invested effort level to obtain a higher utility. 

\squishend

\vspace{-1ex}
\section{Related Work}
\label{sec:related work}
Significant efforts have been made to tackle the incentive issues in FL.
However, those works heavily overlook the agents' strategic behaviors or untruthful reporting problems induced by insufficient incentives \citep{pang2022incentive, zhou2021truthful, yu2020fairness}. We summarize existing studies on this game-theoretic research.

\textbf{Coalition game}
Recent efforts \citep{donahue2021model, donahue2021optimality,cho2022federate, hasan2021incentive, cui2022collaboration} have sought to conceptualize federated learning through the lens of coalition games involving self-interested agents, 
i.e., exploring how agents can optimally satisfy their incentives defined differently from our goal. Rather than focusing on training a single global model, they consider the scenario where each client seeks to identify the most advantageous coalition of agents to federate, and aim to minimize its error.

\textbf{Contribution evaluation} Contribution quantification has been extensively explored, both  within specific areas of distributed learning \citep{pang2022online, pang2023eris, hua2024towards, han2021online,zhou2022online} and from a data-centric perspective \citep{liu2024automatic, pang2024improving, pang2025token, zhang2025evaluating}.
 In federated learning (FL), a growing body of work focuses on incentivizing agents to contribute resources and ensuring long-term participation by offering monetary compensation based on their contributions, often determined through game-theoretic approaches \citep{han2022tokenized, kang2019incentive, zhang2021faithful, khan2020federated}. Certain prior studies have attempted to quantify the contributions of agents using metrics like accuracy or loss per round . However, these works fail to analyze model performance theoretically, with some neglecting the effects of data heterogeneity on models \citep{kong2022incentivizing} and others overlooking the impact of samples on data distributions \citep{xu2021gradient, wang2020principled}. 
Our work provides a theoretical link between data heterogeneity and models, offering insights into data heterogeneity and helping us handle data diversity.

\textbf{Data sharing/collection} Having distinct objectives, recent work strives to directly maximize the number of satisfied agents utilizing a universal model \citep{cho2023maximizing, cho2023convergence}.
In contrast, another work explores an orthogonal setting, where each client seeks to satisfy its constraint of minimal expected error while limiting the number of samples contributed to FL \citep{blum2021one, karimireddy2022mechanisms}.
While these works establish insights for mean estimation and linear regression problems,  their applicability to complex ML models is limited. 
Also, these works neglect the discrepancy of data distributions among different agents. 
We address these limitations by evaluating the non-iid degree of the data contributed by agents and creating score functions to quantify agents' contributions.

 \section{Problem Formulation}
\label{sec:model_definition}

\subsection{Preliminaries}
\label{subsec: preliminaries}

\textbf{The distributed optimization model}
Consider a distributed optimization model with $F_k: \mathbb{R}^d \to \mathbb{R}$, such that
$
    \min \limits_{\mathbf{w}} \quad F(\mathbf{w}) \triangleq \sum_{k=1}^{N}p_{k}F_{k}(\mathbf{w}),
$
where $\mathbf{w}$ is the model parameters to be optimized,
$N$ denotes the number of agents,  
$p_k$ indicates the weight of the $k$-th agent such that $p_k \geq 0$ and $\sum_{k=1}^{N}p_k =1$. 
Each agent participates in training with the local objective $F_k(\bm{w}) \triangleq \mathbb{E}_{\zeta^k \in \mathcal{D}_k}[\ell(\mathbf{w}, \zeta^k)]$, where $\ell(\cdot, \cdot)$ represents the loss function, and datapoint $\zeta^k= (\bm{x}, y)$ uniformly sampled from agent $k$'s local dataset $ \mathcal{D}_k$.

\textbf{Assumptions}
Here, we present some generalized basic assumptions in convergence analysis. 
Note that the first two assumptions are standard in convex/non-convex optimization \citep{li2019convergence,yang2022anarchic,wang2019adaptive, koloskova2020unified, pang2024fairness}. The assumptions of bounded gradient and local variance are both standard \citep{yang2021achieving, yang2022anarchic, stich2018local}. 

\begin{assumption}
\label{assumption:strongly_convexity} %
($\mu$-Strongly Convex). There exists a constant $\mu >0$,  for any $\mathbf{v}, \mathbf{w} \in \mathbb{R}^d$, $F_k(\mathbf{v}) \geq F_k(\mathbf{w}) + \langle \nabla F_k(\mathbf{w}), \mathbf{v}- \mathbf{w} \rangle + \frac{\mu}{2}\lVert \mathbf{v}- \mathbf{w} \rVert^2_2$,  $\forall k\in [N]$.
\end{assumption}
\begin{assumption}
\label{assumption:lipschitz} %
($L$-Lipschitz Continuous). There exists a constant $L >0$,  for any $\mathbf{v}, \mathbf{w} \in \mathbb{R}^d$, $F_k(\mathbf{v}) \leq F_k(\mathbf{w}) + \langle \nabla F_k(\mathbf{w}), \mathbf{v}- \mathbf{w} \rangle + \frac{L}{2}\lVert \mathbf{v}- \mathbf{w} \rVert^2_2$, $\forall k\in [N]$.
\end{assumption}

\begin{assumption}
\label{assumption:local_variance} %
(Bounded Local Variance). The variance of stochastic gradients for each agent is bounded: $ \mathbb{E}[\lVert \nabla F_k(\mathbf{w}; \zeta^k)- \nabla F_k(\mathbf{w}) \lVert^2] \leq \sigma_k^2, \forall k \in [N]$.
\end{assumption}

\begin{assumption}
\label{assumption:bounded_gradient} %
(Bounded Gradient on Random Sample). The stochastic gradients on any sample are uniformly bounded, {\em i.e.},  $\mathbb{E}[||\nabla F_k(\mathbf{w}_t^k;\zeta^k)||^2] \leq G^2, \forall k \in [N]$, and epoch $t\in[0, \cdots, T-1]$.
\end{assumption}

\textbf{Quantifying the non-iid degree (heterogeneous effort)} Empirical evidence \citep{mcmahan2017communication,zhao2018federated} revealed that the intrinsic statistical challenge of FL, {\em i.e.}, data heterogeneity, will result in reduced accuracy and slow convergence of the FL global model. 
However, many prior works assume to bound the gradient dissimilarity with constants, which did not explicitly illustrate this heterogeneous effort  \citep{wang2019adaptive,koloskova2020unified}.
To quantify the degree of non-iid, we consider discretizing the entire data distribution space into a series of component distributions. In this work, we use the concept of \textit{Wasserstein distance} to capture the inherent heterogeneous gap, aligning with the data collection target.
Specifically, the Wasserstein distance $\delta$ is defined as the one-dimensional probability distance between the discrete data distribution $p^{(k)}$ of agent $k$ and the centralized reference (iid) data distribution $p^{(c)}$:
\begin{align}\label{eq:def_delta}
    \delta_k = \frac{1}{2}\sum\nolimits_{i=1}^{I}|p^{(k)}(y=i) - p^{(c)}(y=i)|,
\end{align}
where $\delta_k \in [0,1]$ also natural describes the "cost" of transforming agent $k$'s data distribution into the reference distribution, $I$ denotes the number of component distributions, and $p^{(k)}(y=i)$ is the statistical proportion of $i$-th component distribution in agent $k$'s dataset. For better understanding, $I$ can be viewed as the number of labels, with the data distribution being discretized according to these labels.
Let $\bm{\delta} =\{\delta_1, ..., \delta_N\}$ be the set of non-iid degrees for all agents.

 \paragraph{Peer prediction mechanism}
Peer prediction is a widely used solution concept for information elicitation, which primarily focuses on developing scoring rules to incentivize or elicit self-interested agents' responses about an event \citep{miller2005eliciting, shnayder2016informed, liu2020incentives}.
Specifically, it works by scoring each agent based on the responses from other agents, leveraging the stochastic correlation between different agents’ information to ensure that truth-telling is a strict Bayesian Nash Equilibrium.
For instance, in crowdsourcing tasks where human-generated responses cannot be directly verified by ground truth due to its unavailability or the high cost of obtaining it, the goal is to encourage truthful reporting.
 In the context of FL, each updated model from agents can be viewed as an individual response. However, due to data heterogeneity or potential strategic malicious behaviors of agents, the updated model (response) will be different from the true response that can be thought of as the ideal model trained on unknown IID data distribution. Given these uncontrollable behaviors,  we borrow the idea from the peer prediction mechanism to develop the scoring functions, which we will detail in the following sections.

\subsection{Game-theoretic Formulation}

We start with a standard FL scenario which consists of the learner and a set of agents. For the learner, it is natural to aim for faster model convergence by handling data heterogeneity. Therefore, the learner is willing to design well-tailored payment functions to evaluate agents' contributions and provide appropriate rewards. Then, agents strive to attain maximum utility in this scenario.
We describe the interactions as a game with two players:

\squishlist
    \item \emph{Learner}: seeking to train a classifier that endeavors to encourage agents to decrease the non-iid degree. Thereby, the learner attains faster convergence at a reduced cost. To accomplish this, the learner incentivizes agents with the payment (reward) function for their efforts.
    \item \emph{Agent}: investing its effort $e_k$ in gathering more data samples before training to maximize its utility. Given certain efforts $e_k$, the original data distribution $p^{(k)}$ can become more homogeneous. In practice, each agent incurs a cost while investing its effort.
\squishend
\vspace{1ex}

Here, we formally formulate the connection between non-iid degree $\delta_k$ with the agent's effort $e_k \in [0,1]$ as a \textit{non-increasing} function $\delta_k(e_k): [0, 1] \to [0, 1]$. For example,  $e_k =0$ indicates that agent $k$ will keep current data; $e_k=1$ means agent $k$ makes the maximum effort to ensure data homogeneity. 
In the game formulated above, the learner's payment functions and objectives are available to all players and are considered common knowledge. Specifically, the statistical information of the reference IID data distribution (e.g., label proportions $p^{(k)}(y=i), \forall i \in [I]$) is accessible because the broadcasted IID criteria serve as a target for agents to make efforts towards.
Recall that the final payment function will ensure that agents who make efforts receive non-negative utility and achieve maximum utility through truthful reporting.
Following this, we present clear definitions of the utility/payoff functions and the optimization problems.

\begin{definition}
    (Agent's Utility). For agent $k$, its utility can be denoted by $u_k(e_k) \triangleq \mathrm{Payment}_k(e_k) - \mathrm{Cost}_k(e_k), \forall k\in [N]$, where $\mathrm{Payment}_k(e_k)$ is the payment function, and $\mathrm{Cost}_k(e_k)$ is agent $k$'s cost function. Intuitively, the payment and cost functions increase with effort coefficient $e_k$.
\end{definition}

\begin{definition}
    (The learner's Payoff). For a learner, the payoff of developing an FL model is
$\mathrm{Payoff} \triangleq \mathrm{Reward}(\bm{e})  - \sum_{k=1}^{N} \mathrm{Payment}_k(e_k)$, where $\mathrm{Reward}(\bm{e})$ is the reward given the global model's performance under a set of agents' effort levels $\bm{e} =\{e_1, e_2, \cdots, e_N\}$. Intuitively, the faster the global model converges, the more reward.
\end{definition}

\textbf{Problem 1}  (\textit{Maximization of agent's utility}). For any agent $k$, its goal is to maximize utility through
$    \max_{e_k \in [0,1]} \quad u_k(e_k) = \mathrm{Payment}_k(e_k) - \mathrm{Cost}_k(e_k)$.

\textbf{Problem 2}  (\textit{Maximization of learner's payoff}). The learner's goal is to maximize its payoff conditioned on Incentive Compatibility (IC), {\em i.e.}, satisfying the utility of agents are non-negative. The optimization problem is
\begin{align*}
    \max_{\bm{e}} \quad  &\mathrm{Payoff,} \quad \text { s.t. }   u_k(e_k) \geq 0, e_k \in [0,1], \forall k \in [N].
\end{align*}
Note that agents invest their efforts once, with their utilities and the learner's payoff being determined before training.
Although the utility/payoff functions are conceptually straightforward, the tough task is to establish connections between agents and the learner via the non-iid metric. In the proceeding section, we shall delve into the theoretical aspects to uncover potential connections.

\section{Preparation: Convergence Analysis of FL}
\label{sec:convergence_analysis}

In this section, we rethink and reformulate the existing upper bound of convergence via the Wasserstein distance metric $\delta_k$ (Example \ref{example:bound}), which mathematically connects agents and the learner, implicitly allowing one to manipulate its non-iid degree to speed up the global model's convergence. Then, we analyze and measure the generalization loss gap between any two agents.

\subsection{Rethinking Convergence Bound}

Recent works extensively explore the convergence bounds of FL under various settings, such as partial/full agent participation with a specific non-iid metric \citep{li2019convergence}, bounded gradient/Hessian dissimilarity \citep{karimireddy2020scaffold}, B-local dissimilarity \citep{li2020federated}, uniformly bounded variance \citep{yu2019parallel}, asynchronous communications \& different local steps per agent \citep{yang2022anarchic}.
However, the generic form remains similar. The fundamental idea is to leverage the following Lemma \ref{key_lemma} to derive the convergence results.

\begin{lemma}
\label{key_lemma}
(Lemma 3.1 in \citep{stich2018local}). 
Let $\left\{\mathbf{w}_t^k\right\}_{t \geq 0}$ be defined as the parallel sequences of epochs obtained by performing SGD, and $\overline{\mathbf{w}}_t = \frac{1}{N}\sum_{k=1}^{N}\mathbf{w}_t^k$. Let $\overline{\mathbf{g}}_t = \sum_{k=1}^N p_k \nabla F_k(\mathbf{w}_t^k)$ represent the weighted sum of gradients of loss functions $F_k$ for agent $k$. $ \mathbf{g}_t = \sum_{k=1}^N p_k \nabla F_k(\mathbf{w}_t^k, \xi_t^k)$. Let $F(\cdot)$ be L-smooth and $\mu$-strongly convex and the learning rate $\eta_t \leq \frac{1}{4 L}$. Then,
\begin{equation*}
   \begin{aligned}
\mathbb{E}\left\|\overline{\mathbf{w}}_{t+1}-\mathbf{w}^{\star}\right\|^2 \leq(1 & \left.-\mu \eta_t\right) \mathbb{E}\left\|\overline{\mathbf{w}}_t-\mathbf{w}^{\star}\right\|^2+\eta_t^2 \mathbb{E}\left\|\mathbf{g}_t-\overline{\mathbf{g}}_t\right\|^2 \\
& -\frac{1}{2} \eta_t \mathbb{E}\left(F\left(\overline{\mathbf{w}}_t\right)-F(\mathbf{w}^*)\right) \\
&+2 \eta_t   \underbrace{\mathbb{E}[\sum\nolimits_{k=1}^{N}p_k||\overline{\mathbf{w}}_t - \mathbf{w}_t^k||^2]}_{\text{Divergence term}}.
\end{aligned} 
\end{equation*}
\end{lemma}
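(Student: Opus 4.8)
The plan is to establish this as the standard one-step (perturbed-iterate) descent inequality, treating the average $\overline{\mathbf{w}}_t$ as the state of a \emph{virtual} SGD iteration $\overline{\mathbf{w}}_{t+1} = \overline{\mathbf{w}}_t - \eta_t \mathbf{g}_t$, where $\mathbf{g}_t$ is the weighted stochastic gradient and $\overline{\mathbf{g}}_t = \mathbb{E}[\mathbf{g}_t \mid \mathcal{F}_t]$ is its conditional mean under the natural filtration $\mathcal{F}_t$. First I would expand the squared distance to the optimum,
\[
\|\overline{\mathbf{w}}_{t+1}-\mathbf{w}^{\star}\|^2 = \|\overline{\mathbf{w}}_t-\mathbf{w}^{\star}\|^2 - 2\eta_t\langle \overline{\mathbf{w}}_t-\mathbf{w}^{\star}, \mathbf{g}_t\rangle + \eta_t^2\|\mathbf{g}_t\|^2,
\]
and insert $\pm\overline{\mathbf{g}}_t$ into the inner-product and the quadratic term. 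Taking conditional expectation and invoking unbiasedness of the stochastic gradients, so that $\mathbb{E}[\langle \overline{\mathbf{w}}_t-\mathbf{w}^{\star}, \mathbf{g}_t-\overline{\mathbf{g}}_t\rangle]=0$ (recall $\overline{\mathbf{w}}_t$ is $\mathcal{F}_t$-measurable), isolates the variance contribution $\eta_t^2\,\mathbb{E}\|\mathbf{g}_t-\overline{\mathbf{g}}_t\|^2$ and reduces the task to bounding the deterministic quantity $-2\eta_t\langle \overline{\mathbf{w}}_t-\mathbf{w}^{\star}, \overline{\mathbf{g}}_t\rangle + \eta_t^2\|\overline{\mathbf{g}}_t\|^2$.

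The second step exposes the local drift. Since $\overline{\mathbf{g}}_t = \sum_k p_k \nabla F_k(\mathbf{w}_t^k)$, I would split $\overline{\mathbf{w}}_t-\mathbf{w}^{\star} = (\overline{\mathbf{w}}_t-\mathbf{w}_t^k) + (\mathbf{w}_t^k-\mathbf{w}^{\star})$ inside each inner product. For the $(\mathbf{w}_t^k-\mathbf{w}^{\star})$ component I would apply $\mu$-strong convexity (Assumption~\ref{assumption:strongly_convexity}), which simultaneously produces the contraction factor feeding the $(1-\mu\eta_t)$ coefficient and the negative suboptimality terms $-(F_k(\mathbf{w}_t^k)-F_k(\mathbf{w}^{\star}))$. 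For the $(\overline{\mathbf{w}}_t-\mathbf{w}_t^k)$ component I would apply Young's (AM--GM) inequality to separate it into a constant multiple of the divergence term $\|\overline{\mathbf{w}}_t-\mathbf{w}_t^k\|^2$ and a multiple of $\|\nabla F_k(\mathbf{w}_t^k)\|^2$; after collecting, the divergence contributions yield the stated $2\eta_t$ coefficient.

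For the remaining $\eta_t^2\|\overline{\mathbf{g}}_t\|^2$ term I would use convexity of $\|\cdot\|^2$ (Jensen) to bound it by $\eta_t^2\sum_k p_k\|\nabla F_k(\mathbf{w}_t^k)\|^2$, and then invoke $L$-smoothness (Assumption~\ref{assumption:lipschitz}) in the form $\|\nabla F_k(\mathbf{w}_t^k)\|^2 \le 2L\,(F_k(\mathbf{w}_t^k)-F_k^{\star})$ to convert every residual gradient norm into function-value suboptimality. The final step is the bookkeeping: I would gather all the $F_k$-suboptimality contributions and use the step-size restriction $\eta_t\le \tfrac{1}{4L}$ so that the positive $\mathcal{O}(\eta_t^2 L)$ terms are dominated by the negative $\mathcal{O}(\eta_t)$ ones, leaving the net $-\tfrac{1}{2}\eta_t\bigl(F(\overline{\mathbf{w}}_t)-F(\mathbf{w}^{\star})\bigr)$.

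I expect the principal obstacle to be precisely this last aggregation. The suboptimalities that emerge naturally are evaluated at the \emph{local} iterates $\mathbf{w}_t^k$, whereas the target term is evaluated at the \emph{average} $\overline{\mathbf{w}}_t$, so passing from $\sum_k p_k\bigl(F_k(\mathbf{w}_t^k)-F(\mathbf{w}^{\star})\bigr)$ to $F(\overline{\mathbf{w}}_t)-F(\mathbf{w}^{\star})$ requires convexity together with a careful charge of the mismatch onto the divergence term. Keeping every constant consistent under the constraint $\eta_t\le\tfrac{1}{4L}$, so that exactly the coefficients $-\tfrac12\eta_t$ and $2\eta_t$ survive, is where the delicacy lies; the remaining manipulations are routine applications of the stated assumptions.
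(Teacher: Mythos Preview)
The paper does not prove this lemma at all: it is quoted verbatim as Lemma~3.1 of \citep{stich2018local} and used as a black box, so there is no in-paper argument to compare against. Your proposal is the standard perturbed-iterate derivation from that reference (and its descendants such as \citep{li2019convergence}): expand $\|\overline{\mathbf{w}}_{t+1}-\mathbf{w}^\star\|^2$, separate noise via unbiasedness, split $\overline{\mathbf{w}}_t-\mathbf{w}^\star=(\overline{\mathbf{w}}_t-\mathbf{w}_t^k)+(\mathbf{w}_t^k-\mathbf{w}^\star)$, apply strong convexity on the second piece and Young on the first, convert residual gradient norms via $\|\nabla F_k\|^2\le 2L(F_k-F_k^\star)$, and absorb the $\mathcal{O}(\eta_t^2 L)$ leftovers using $\eta_t\le\tfrac{1}{4L}$. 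The obstacle you flag---transferring $\sum_k p_k F_k(\mathbf{w}_t^k)$ to $F(\overline{\mathbf{w}}_t)$ at the cost of another divergence term---is exactly where the extra $L$-smoothness application and the $2\eta_t$ coefficient come from in the original proof, so your plan is on target.
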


The \textit{divergence term} implicitly encodes the impact of data heterogeneity due to the observation that the discrepancy of models incurred by data heterogeneity will be shown up as the divergence of model parameters \citep{zhao2018federated}. Therefore,  to build a connection between convergence and the Wasserstein distance metric, the first step is to analyze the divergence term.
Here, we present the main result of our paper, given in Theorem \ref{lemma: bound_divergence}.

\begin{theorem}
\label{lemma: bound_divergence}
(Bounded the divergence of $\{\mathbf{w}_t^k\}$). Let Assumption \ref{assumption:bounded_gradient} hold and $G$ be defined therein, given the synchronization interval $E$ (local epochs), the learning rate  $\eta_t$. Suppose $\nabla_{\mathbf{w}} \mathbb{E}_{\bm{x}|y=i} [\ell_i(\bm{x},\mathbf{w})]$ is $L$-Lipschitz, it follows that
\begin{tcolorbox}[colback=gray!10!white, colframe=gray!10!white, width=\columnwidth, arc=2mm,left=0mm, right=0mm, top=0mm,boxsep=0mm]
\resizebox{\textwidth}{!}{
\begin{minipage}{\textwidth}
\begin{align*}
    &\mathbb{E}[\sum_{k=1}^{N}p_k||\overline{\mathbf{w}}_t - \mathbf{w}_t^k||^2] \leq 16 (E-1)  G^2 \eta_t^2(1+2\eta_t L)^{2(E-1)} 
     \Psi 
\end{align*}
\vspace{-1ex}
where $\Psi = \sum\nolimits_{k=1}^{N} p_k\underbrace{(\sum\nolimits_{i=1}^{I}|p^{(k)}(y=i) - p^{(c)}(y=i)|)^2}_{ 4\delta_k^2}$.
\end{minipage}
}
\end{tcolorbox}
\end{theorem}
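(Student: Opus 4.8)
The plan is to bound each agent's locally drifted iterate against a single \emph{virtual centralized trajectory} and then convert that into the weighted divergence via a variance identity. Concretely, I would introduce the sequence $\mathbf{v}_t$ obtained by running SGD on the reference (iid) distribution $p^{(c)}$, synchronized with the agents at every communication round, so that at the most recent synchronization time $t_0$ one has $\mathbf{v}_{t_0}=\overline{\mathbf{w}}_{t_0}=\mathbf{w}_{t_0}^k$ for all $k$. Since $\overline{\mathbf{w}}_t$ equals the $p_k$-weighted mean $\sum_{k}p_k\mathbf{w}_t^k$ of the local iterates, the bias--variance identity
\begin{align*}
\sum_{k=1}^{N}p_k\|\overline{\mathbf{w}}_t-\mathbf{w}_t^k\|^2=\sum_{k=1}^{N}p_k\|\mathbf{w}_t^k-\mathbf{v}_t\|^2-\|\overline{\mathbf{w}}_t-\mathbf{v}_t\|^2\le\sum_{k=1}^{N}p_k\|\mathbf{w}_t^k-\mathbf{v}_t\|^2
\end{align*}
shows it suffices to bound each per-agent deviation $\|\mathbf{w}_t^k-\mathbf{v}_t\|^2$ and then take the $p_k$-weighted sum; the reference trajectory drops out on the right.

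The conceptual crux is to expose the label marginals inside the gradient. Modeling the non-iid-ness as label-distribution skew (a shared class-conditional $p(\bm{x}\mid y=i)$ with per-agent label proportions $p^{(k)}(y=i)$), the local objective decomposes as $\nabla F_k(\mathbf{w})=\sum_{i=1}^{I}p^{(k)}(y=i)\,\nabla_{\mathbf{w}}\mathbb{E}_{\bm{x}\mid y=i}[\ell_i(\bm{x},\mathbf{w})]$, and analogously $\nabla F_c$ for $p^{(c)}$. Evaluated at the common point $\mathbf{v}_t$, the difference is the distribution-mismatch gradient
\begin{align*}
\nabla F_k(\mathbf{v}_t)-\nabla F_c(\mathbf{v}_t)=\sum_{i=1}^{I}\bigl(p^{(k)}(y=i)-p^{(c)}(y=i)\bigr)\nabla_{\mathbf{w}}\mathbb{E}_{\bm{x}\mid y=i}[\ell_i(\bm{x},\mathbf{v}_t)],
\end{align*}
whose norm is at most $G\sum_{i=1}^{I}|p^{(k)}(y=i)-p^{(c)}(y=i)|$ by Assumption \ref{assumption:bounded_gradient}, i.e.\ $2G\delta_k$ in view of \eqref{eq:def_delta}. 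Writing $\nabla F_k(\mathbf{w}_t^k)-\nabla F_c(\mathbf{v}_t)$ as this mismatch term plus a smoothness term $\nabla F_k(\mathbf{w}_t^k)-\nabla F_k(\mathbf{v}_t)$, and using that $\nabla F_k$ is a convex combination of the $L$-Lipschitz per-class gradients, yields the one-step recursion
\begin{align*}
\|\mathbf{w}_{t+1}^k-\mathbf{v}_{t+1}\|\le(1+2\eta_t L)\|\mathbf{w}_t^k-\mathbf{v}_t\|+\eta_t G\sum_{i=1}^{I}|p^{(k)}(y=i)-p^{(c)}(y=i)|.
\end{align*}

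To finish, I would unroll this recursion over the at most $E-1$ local steps taken since the last synchronization, where the deviation starts from zero. Telescoping the update into a sum of per-step mismatches and applying Cauchy--Schwarz across the $E-1$ summands produces the linear factor $E-1$, while the repeated $(1+2\eta_t L)$ growth across local steps compounds into $(1+2\eta_t L)^{2(E-1)}$; each per-step mismatch is controlled by $\eta_t$ and $G\sum_{i}|p^{(k)}(y=i)-p^{(c)}(y=i)|$. After collecting the numerical constants this gives a per-agent bound of the form $16(E-1)G^2\eta_t^2(1+2\eta_t L)^{2(E-1)}\bigl(\sum_{i=1}^{I}|p^{(k)}(y=i)-p^{(c)}(y=i)|\bigr)^2$, and taking the $p_k$-weighted sum recovers exactly $\Psi$ on the right-hand side.

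I expect the main obstacle to be the faithful control of the error compounding over the $E-1$ local steps: the smoothness term feeds $\|\mathbf{w}_t^k-\mathbf{v}_t\|$ back into the recursion, so the per-step mismatches accumulate geometrically, and obtaining the clean $(E-1)(1+2\eta_t L)^{2(E-1)}$ form—rather than a messier geometric-series expression—requires the right order of telescoping and Cauchy--Schwarz. The secondary subtlety is justifying the label-marginal decomposition itself, which is what makes $\delta_k$ appear and which hinges on the shared class-conditional assumption; without it the mismatch gradient could not be factored into the label-proportion gap times a $G$-bounded per-class gradient.
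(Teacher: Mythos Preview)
Your approach is sound but genuinely different from the paper's. You anchor the argument to a single virtual reference trajectory $\mathbf{v}_t$ run on $p^{(c)}$ and use the bias--variance identity to reduce the weighted divergence to per-agent deviations $\|\mathbf{w}_t^k-\mathbf{v}_t\|$, so the mismatch $\sum_i|p^{(k)}(y=i)-p^{(c)}(y=i)|$ appears directly in the one-step recursion. The paper instead expands $\overline{\mathbf{w}}_t=\sum_{k'}p_{k'}\mathbf{w}_t^{k'}$ and develops a recursion on the \emph{pairwise} distances $\|\mathbf{w}_t^{k'}-\mathbf{w}_t^k\|$, obtaining first a bound in the pairwise label gaps $\sum_i|p^{(k')}(y=i)-p^{(k)}(y=i)|$; only afterward does it insert $p^{(c)}$ via the triangle inequality and $\|x+y\|^2\le 2\|x\|^2+2\|y\|^2$, which is where an extra factor of $4$ (and hence part of the stated $16$) enters. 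Your route is shorter and sidesteps that conversion, at the price of carrying the auxiliary sequence $\mathbf{v}_t$; the paper's route stays intrinsic to the agent iterates but pays a constant. Two small cautions on the bookkeeping: the natural one-step growth factor is $(1+\eta_t L)$, not $(1+2\eta_t L)$---in the paper the $2$ comes from the end-of-proof learning-rate conversion $\eta_{t_0}\le 2\eta_t$, not from the recursion itself---and your claim that Cauchy--Schwarz across the $E-1$ summands yields only a linear $E-1$ glosses over the same squaring step that is handled loosely in the paper's passage to \eqref{eq:divergence_1}; a careful accounting produces $(E-1)^2$, so expect to match the stated power of $E-1$ only up to this shared slack.
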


Using Theorem \ref{lemma: bound_divergence}, existing convergence results could be simply rewritten by substituting the upper bound of the divergence term with its new form. 
Here, we introduce an illustrative example (Example \ref{example:bound}) to highlight the convergence results, which help develop our utility/payoff function in the next section. We also use this running example as our setting throughout the paper.

\begin{example}
\label{example:bound}
(Theorem 1 in \citep{li2019convergence}).
Given Assumptions \ref{assumption:strongly_convexity} to \ref{assumption:local_variance}, suppose $\sigma_k^2$, $G$, $\mu$, $L$ are defined therein, the synchronization interval $E$ (local epochs), $\kappa = \frac{L}{\mu}$, $\gamma = \max \{8\kappa, E\}$ and the learning rate $\eta_t = \frac{2}{\mu(\gamma +t)}$.
\textit{FedAvg} with full device participation follows
\begin{equation*}
    \mathbb{E}[F(\mathbf{w}_T)] - F^* \leq \frac{\kappa}{\gamma + T -1} (\frac{2B}{\mu} + \frac{\mu\gamma}{2}\mathbb{E}||\mathbf{w}_1 - \mathbf{w}^* ||^2 ).
\end{equation*}
where  $B = \underbrace{16 (E-1)  G^2 \eta_t^2(1+2\eta_tL)^{2(E-1)} \sum\nolimits_{k=1}^{N} p_k \delta_k^2}_{\text{\scriptsize Upper bound shown in Theorem \ref{lemma: bound_divergence}}}  \\ + \sum_{k=1}^{N}p_k^2 \sigma_k^2 + 6L\Gamma$.
\end{example}

In Example \ref{example:bound}, the upper bound of convergence (aka, model performance) is affected by the non-iid metric $\delta_k^2$, which means one builds a mathematical link between agents and the learner, which implicitly allows one to manipulate its non-iid degree to speed up the
global model’s convergence. 
The result highlighted in Example \ref{example:bound} gives us the certainty that the non-iid metric can be effectively utilized to develop the payoff function.
Note that we focus solely on the divergence term for analysis, as exploring the other terms in the convergence results is beyond our scope. For interested readers, please refer to \citep{li2019convergence} for more details.

\begin{remark}\label{remark:tightness} (Tightness)
Leveraging this generic form, our results maintain broad applicability without compromising the tightness of existing bounds. Incorporating our results (Theorem \ref{lemma: bound_divergence}) into existing results (e.g., Example \ref{example:bound}) would not diminish the tightness of original bounds, ensuring they remain as tight as the original bounds. For instance, compared to Example \ref{example:bound}'s upper bound of the divergence term $(4 \eta_t^2(E-1)^2 G^2)$, our bound shown in Theorem \ref{lemma: bound_divergence} $(64 \delta_k^2\left(1+2 \eta_tL\right)^{2(E-1)} \eta_t^2(E-1) G^2)$  aligns consistently.
\end{remark}

\textbf{Connections with the assumption that bounds global gradient variance} Note that there exists a common assumption in \citep{yang2021achieving, yang2022anarchic, wang2019adaptive, stich2018local} about bounding gradient dissimilarity or variance using constants.
In Appendix\ref{subsec:discussion_about_global_gradient_variance}, to illustrate the broad applicability of the proposed non-iid degree metric, we further discuss how to rewrite this type of assumption by leveraging $\delta_k$, that is, re-measuring the gradient discrepancy among agents, {\em i.e.}, $\lVert\nabla F_k(\mathbf{w})- \nabla F(\mathbf{w})\lVert^2, \forall k \in [N]$.

\subsection{Generalization Loss Gap Between Peers}
\label{subsec:performance_gap}

Note that in FL, the learner only receives the responses provided by agents (model $\mathbf{w}$) without any further information.
To promote truthful reporting, we consider constructing utility functions via peer prediction mechanism \citep{liu2020incentives, shnayder2016informed}.
Suppose that the learner owns an auxiliary/validation dataset $\mathcal{D}_c = \{(x_n, y_n)\}_{n=1}^{N}$ following a reference data distribution $p^{(c)}$.
To start with, we consider \textit{one-shot} setting that agent $k$, $k'$ submit their models only in a single communication round $E$.

\begin{theorem}
\label{thm:performance_gap_test}
Suppose that the expected loss function $F_c(\cdot)$ follows Assumption \ref{assumption:lipschitz}, then the upper bound of the generalization loss gap between agent $k$ and $k'$ is
\begin{equation}
         F_c(\mathbf{w}^k)- F_c(\mathbf{w}^{k'}) \leq 
          \Phi \delta_k^2 + \Phi \delta_{k'}^2 + \Upsilon 
\end{equation}
where  $\Phi = 16 L^2  G^2 \sum_{t=0}^{E-1}(\eta_{t}^{2}(1+2\eta_{t}^2 L^2))^{t} $, $ \Upsilon = \prod_{t=0}^{E-1}(1-2\eta_t L)^{t} \frac{2G^2L}{\mu^2} + \frac{L G^2}{2}\sum_{t=0}^{E-1}(1-2\eta_t L)^{t} \eta_t^2$, and $F_c(\mathbf{w}) \triangleq \mathbb{E}_{z\in \mathcal{D}_c}[\ell(\mathbf{w},z)]$ denotes the generalization loss induced when the model $\mathbf{w}$ is evaluated on dataset $\mathcal{D}_c$.
\end{theorem}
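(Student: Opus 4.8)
The plan is to interpose the reference-distribution optimum between the two models and to charge each agent's distance to it against that agent's own non-iid degree. Let $\mathbf{w}^c$ be the iterate produced by the same $E$-epoch SGD run from the common broadcast initialization but on the reference distribution $p^{(c)}$, and let $\mathbf{w}^*_c$ denote the minimizer of $F_c$. Since $F_c$ is $L$-smooth (Assumption~\ref{assumption:lipschitz}) and $\nabla F_c(\mathbf{w}^*_c)=\mathbf{0}$, I would first write
\begin{equation*}
F_c(\mathbf{w}^k)-F_c(\mathbf{w}^{k'}) \le \big|F_c(\mathbf{w}^k)-F_c(\mathbf{w}^*_c)\big| + \big|F_c(\mathbf{w}^*_c)-F_c(\mathbf{w}^{k'})\big| \le \tfrac{L}{2}\big(\|\mathbf{w}^k-\mathbf{w}^*_c\|^2+\|\mathbf{w}^{k'}-\mathbf{w}^*_c\|^2\big),
\end{equation*}
and then split each parameter gap through the reference iterate, e.g. $\|\mathbf{w}^k-\mathbf{w}^*_c\|^2\le 2\|\mathbf{w}^k-\mathbf{w}^c\|^2+2\|\mathbf{w}^c-\mathbf{w}^*_c\|^2$. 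This reduces the theorem to two estimates: an agent-specific weight-divergence bound $\mathbb{E}\|\mathbf{w}^k-\mathbf{w}^c\|^2$ controlled by $\delta_k^2$ (the source of $\Phi\delta_k^2$ and $\Phi\delta_{k'}^2$), and a heterogeneity-free optimization residual $\mathbb{E}\|\mathbf{w}^c-\mathbf{w}^*_c\|^2$ (the source of $\Upsilon$).

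The core estimate is the coupled recursion for $\|\mathbf{w}_t^k-\mathbf{w}_t^c\|$ along the two trajectories. Subtracting one SGD step from the other gives
\begin{equation*}
\mathbf{w}_{t+1}^k-\mathbf{w}_{t+1}^c = (\mathbf{w}_t^k-\mathbf{w}_t^c)-\eta_t\big(\nabla F_k(\mathbf{w}_t^k)-\nabla F_c(\mathbf{w}_t^c)\big),
\end{equation*}
and I would split the gradient difference as
\begin{equation*}
\nabla F_k(\mathbf{w}_t^k)-\nabla F_c(\mathbf{w}_t^c) = \big(\nabla F_k(\mathbf{w}_t^k)-\nabla F_c(\mathbf{w}_t^k)\big)+\big(\nabla F_c(\mathbf{w}_t^k)-\nabla F_c(\mathbf{w}_t^c)\big).
\end{equation*}
Writing each population gradient in component form $\nabla F_p(\mathbf{w})=\sum_i p(y=i)\,\nabla_{\mathbf{w}}\mathbb{E}_{\bm{x}|y=i}[\ell_i(\bm{x},\mathbf{w})]$, the first (mismatch) term equals $\sum_i\big(p^{(k)}(y=i)-p^{(c)}(y=i)\big)\nabla_{\mathbf{w}}\mathbb{E}_{\bm{x}|y=i}[\ell_i]$, whose norm is at most $2\delta_k G$ by~\eqref{eq:def_delta} and Assumption~\ref{assumption:bounded_gradient}, while the second term is at most $L\|\mathbf{w}_t^k-\mathbf{w}_t^c\|$ by $L$-smoothness. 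Squaring, using Young's inequality to decouple the cross term (which produces the geometric factor $\eta_t^2(1+2\eta_t^2L^2)$) and taking expectations over the stochastic gradients, I obtain a one-step inequality $a_{t+1}\le \eta_t^2(1+2\eta_t^2L^2)\,a_t + c\,G^2\delta_k^2$ for $a_t=\mathbb{E}\|\mathbf{w}_t^k-\mathbf{w}_t^c\|^2$. Unrolling from $a_0=0$ (identical initialization) sums the injections geometrically, giving $\mathbb{E}\|\mathbf{w}^k-\mathbf{w}^c\|^2$ bounded by a constant multiple of $G^2\delta_k^2\sum_{t=0}^{E-1}(\eta_t^2(1+2\eta_t^2L^2))^t$; propagating this through the smoothness conversion and the splitting constants assembles the coefficient $\Phi$.

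For the residual $\mathbb{E}\|\mathbf{w}^c-\mathbf{w}^*_c\|^2$ no heterogeneity is involved, so a standard strongly-convex SGD analysis applies: iterating the descent lemma under Assumptions~\ref{assumption:strongly_convexity}--\ref{assumption:lipschitz} contracts the initial gap by $\prod_{t=0}^{E-1}(1-2\eta_tL)^t$ while accumulating stochastic-noise terms $\sum_{t=0}^{E-1}(1-2\eta_tL)^t\eta_t^2$. Bounding the initial distance via strong convexity and the bounded gradient, $\|\mathbf{w}_0-\mathbf{w}^*_c\|^2\le\tfrac{2G^2}{\mu^2}$, and converting to loss through $L$-smoothness yields precisely the two summands of $\Upsilon$. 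Combining this common residual with the two agent-specific divergence bounds of the previous step gives $F_c(\mathbf{w}^k)-F_c(\mathbf{w}^{k'})\le\Phi\delta_k^2+\Phi\delta_{k'}^2+\Upsilon$.

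The main obstacle is the coupled recursion of the second step: isolating the distribution-mismatch part of the gradient difference so that it is charged cleanly against $\delta_k$, rather than against a generic gradient-dissimilarity constant, hinges on the component representation of $\nabla F_p$ together with the $L$-Lipschitz assumption on each per-label gradient, and the squaring and Young's-inequality bookkeeping must be arranged so that the per-step multiplier is exactly $\eta_t^2(1+2\eta_t^2L^2)$ and the unrolled geometric sum reproduces $\Phi$. A secondary subtlety is that the stochastic gradients force the whole argument into expectation, so the variance and gradient-bound contributions must be tracked separately and folded into $\Upsilon$ without inflating the $\delta$-dependent coefficient.
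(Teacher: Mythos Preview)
Your approach is valid but takes a genuinely different route from the paper. The paper does \emph{not} introduce a reference trajectory $\mathbf{w}^c$. Instead it works directly with the two agents: from $L$-smoothness it writes $F_c(\mathbf{w}^k)-F_c(\mathbf{w}^{k'})\le \langle\nabla F_c(\mathbf{w}^{k'}),\mathbf{w}^k-\mathbf{w}^{k'}\rangle+\tfrac{L}{2}\|\mathbf{w}^k-\mathbf{w}^{k'}\|^2$, bounds the inner product by Cauchy--Schwarz/AM--GM, and controls $\|\nabla F_c(\mathbf{w}^{k'})\|^2$ via a Polyak--\L{}ojasiewicz step, arriving at $F_c(\mathbf{w}^k)-F_c(\mathbf{w}^{k'})\le L\|\mathbf{w}^k-\mathbf{w}^{k'}\|^2+\tfrac{L}{2}\|\mathbf{w}^{k'}-\mathbf{w}^*\|^2$. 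The first term is then bounded by a peer-to-peer coupled recursion on $\|\mathbf{w}_t^k-\mathbf{w}_t^{k'}\|^2$ (their Lemma~3), with $\delta_k^2+\delta_{k'}^2$ entering only at the end via a triangle inequality through $p^{(c)}$; the second term is bounded by a one-trajectory SGD contraction argument for $\|\mathbf{w}^{k'}-\mathbf{w}^*\|$ (their Lemma~4), which is where $\Upsilon$ comes from.

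So the structural contrast is: the paper's decomposition is \emph{asymmetric} (the heterogeneity-free residual is charged entirely to agent $k'$'s distance to the optimum, and the $\delta$-terms arise from a single peer-to-peer distance), whereas yours is \emph{symmetric} (each agent is separately compared to a common reference iterate, and the residual is $\|\mathbf{w}^c-\mathbf{w}^*_c\|^2$ with no heterogeneity involved at all). Your route is arguably cleaner---the recursion for $\|\mathbf{w}^k_t-\mathbf{w}^c_t\|$ produces $\delta_k$ directly without the extra triangle-inequality splitting, and the SGD residual is run on $F_c$ itself rather than on the agent's local objective---but it inflates the constant in front of $\Upsilon$ (two agents each contribute a $2\|\mathbf{w}^c-\mathbf{w}^*_c\|^2$ after the Young split, giving roughly $2L$ where the paper has $\tfrac{L}{2}$) while tightening the constant in $\Phi$. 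The exact values of $\Phi,\Upsilon$ as stated in the theorem therefore will not fall out of your bookkeeping without adjustment; the \emph{form} of the bound does.
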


Theorem \ref{thm:performance_gap_test} postulates that the upper bound of the generalization loss gap is intrinsically associated with the degrees of non-iid $\delta_k^2$ and $\delta_{k'}^2$. Inspired by this, we propose to utilize the generalization loss gap between two agents for the incentive design of the scoring function.

\section{Equilibrium Characterization} \label{sec:existence_equilibrium}

The preceding findings (Theorem \ref{lemma: bound_divergence} \& Theorem \ref{thm:performance_gap_test}) provide insights for the incentive design of the mechanism, which endeavors to minimize the Wasserstein distance, thereby expediting the convergence process artificially. These observations motivate us to formulate our utility/payoff functions, and present a two-stage Stackelberg game model that formalizes the incentive process and examines the existence of equilibrium in federated learning.

\subsection{Payment Design}
\label{subsec:payment_design}

\textbf{Agents' utilities}
Recall that strategic behaviors across agents potentially exist such as free-riding if one can maximize their utilities \citep{karimireddy2022mechanisms}. To address this, inspired by the peer prediction \citep{shnayder2016informed, liu2020incentives}, we introduce randomness and evaluate the contribution of agents by quantifying the performance disparity (i.e., generalization loss gap) between agent $k$ and a \textit{randomly} selected peer agent $k'$.  
The use of randomness in peer prediction can effectively circumvent the coordinated strategic behaviors of agents and elicit truthful information from individuals without ground-truth verification, by leveraging peer responses assigned for the same task.
Then, based on the generalization loss gap (Theorem \ref{thm:performance_gap_test}), we develop a scoring rule ({\em a.k.a.}, payment functions) for measuring agents' contributions:
\begin{equation*}
    \begin{split}
    \label{eq:payment_function}
    \mathrm{Payment}_k(e_k, e_k')  & \triangleq f\left(\frac{Q}{\Phi \delta_k^2(e_k) + \Phi \delta_{k'}^2(e_{k'}) + \Upsilon }\right) \\
    & \propto  \frac{1}{\text{ Upper\_Bound}(F_c(\mathbf{w}^k)- F_c(\mathbf{w}^{k'})) }
    \end{split}
\end{equation*}

where $Q$ denotes the coefficient determined by the learner, {\em a.k.a.}, initial payment. Basically, the payment function $f: \mathbb{R}^{+} \to \mathbb{R}^{+}$ can be a monotonically non-decreasing function and inversely proportional to $\text{ Upper\_Bound}(F_c(\mathbf{w}^k)- F_c(\mathbf{w}^{k'}))$, which indicates that the smaller generalization loss gap between one and randomly picked peer agent, the more reward.
 The cost for producing data samples to narrow down the non-iid degree of their local data can be defined as:
\begin{equation*}
    \mathrm{Cost}_k(e_k) \triangleq c \cdot d(|\delta_k(0)- \delta_k(e_k)|), \quad \forall k \in [N].
\end{equation*}
where  $c$ means the identical marginal cost for producing data samples, $\delta_k(0)$ and  $\delta_k(e_k)$ denotes the Wasserstein distance under effort coefficient $e_k$ is $0$ and $e_k$, respectively. $d(\cdot)$ function scales the distance to reflect the collected sample size, which potentially returns a larger value when $\delta_k(e_k)$ is close to $\delta_k(0)$.  Note that  $\delta_k(0)$ signifies the initial constant value that reflects the original heterogeneous level. For example, in the data-sharing scenario \citep{blum2021one, karimireddy2022mechanisms}, one can think that agent $k$ does not own any data samples yet, and therein $\delta_k(0)= 1/2$.

\textbf{Exploring why only adding examples lessens non-iid degree} From the perspective of agents, they can be motivated to strategically provide data samples for minority classes rather than for majority classes in their local datasets. In practice, the influence of training data size is hinted in the convergence bound (Example \ref{example:bound}) via a common assumption (Assumption \ref{assumption:local_variance}). With fewer samples, expected local variance bounds increase, harming model performance. Thus, the learner can evaluate the value of $\delta_k$ and adjust the coefficient $Q$ in the payment function, preventing agents from reducing majority class data samples to attain IID.

\paragraph{The learner's payoff}
Given the convergence result shown in Theorem \ref{lemma: bound_divergence} and Example \ref{example:bound}, we can exactly define the payoff function of the learner as follows.
\begin{align*}
    \mathrm{Payoff} \triangleq g\bigg( \frac{1}{\mathrm{Bound}(\bm{e}) }\bigg)  - T\sum\nolimits_{k=1}^{N} f(e_k, e_{k'}).
\end{align*}

\noindent where $\mathrm{Bound}(\bm{e}) $ denotes the model's performance upper bound under agents' efforts $\bm{e}$ and function $g:\mathbb{R}^{+} \to \mathbb{R}^{+}$ can be a monotonically non-decreasing function and inversely proportional to $\mathrm{Bound}(\bm{e})$ representing that the tighter upper bound, the more reward. $T$ indicates the number of communication rounds if we recalculate and reward agents per round. Notice that $\mathrm{Bound}(\bm{e}) $ can be simplified as $\Omega + \gamma \sum_{k=1}^{N}p_k \delta_k^2(e_k)$, and $\Omega$ can be regarded as a constant from Example \ref{example:bound}.

\subsection{Two-stage Game}

The interactions between agents and the learner can be formalized as a two-stage Stackelberg game. 
In Stage I, the learner strategically determines the reward policy (payment coefficient $Q$) to maximize its payoff. In Stage II, each agent $k$ selects its effort level $e_k$ to maximize its own utility. After completing the interactions, the learner would accordingly select agents for training. To obtain an equilibrium, we employ the method of backward induction.

\vspace{1ex}
\textbf{Agents' best responses} 
For any agent $k$, he will make efforts with effort level $e_k\neq 0$ if $u(e_k, e_{k'}) \geq u(0, e_{k'})$. This motivates the optimal effort level for the agent. Before diving into finding the existence of equilibrium, we make a mild assumption on payment function $f(\cdot)$:

\begin{assumption}
\label{assumption:payment_function-concave} $f(\cdot)$ is non-decreasing, differentiable, and strictly concave on $\bm{e}\in [0,1]$.
\end{assumption}
This assumption states that the speed of obtaining a higher payment is decreasing. 
Note that this assumption is natural, reasonable, and aligns with the Ninety-ninety rule \citep{bentley1984programming}. The rule states that the first 90\% of the work in a project takes 10\% of the time, while the remaining 10\% of the work takes the other 90\% of the time. Several commonly used functions, such as the logarithmic function, conform to this mild assumption. 
Then, in Appendix\ref{subsec:discussion_about_assumption_concave}, we demonstrate how the payment function with a logarithmic form holds Assumption \ref{assumption:payment_function-concave}.

Let us consider a hypothetical scenario where an image classification problem entails the classification of ten classes. In such a case, the agent $k$'s dataset is only sampled from one specific class. It is evident that the Wasserstein distance $\delta_k$ decreases rapidly at the initial stages if agent $k$ invests the same effort. Given the inherent nature of this phenomenon, we assume that it is a well-known characteristic of the payment function for both agents and the learner.

\begin{theorem}
\label{theorem: optimal_effort_level}
(Optimal effort level). Consider agent $k$ with its marginal cost and the payment function inversely proportional to the generalization loss gap with any randomly selected peer $k'$. Then, agent $k$'s optimal effort level  $e_k^*$ is\footnote{To explicitly highlight the impact of random peer agents, when there is no confusion, we use $u_k(e_k, e_{k'})$ to substitute $u_k(e_k)$.}:
\begin{equation*}\label{optimal_effort}
    e_k^* = 
    \begin{cases}
    0, \quad & \mathrm{if} \:\: \max_{e_k \in [0,1]} u_k(e_k, e_{k'})\leq 0;  \\
    \hat{e}_k \quad  &  \mathrm{where}  \:\: \partial u_k(\hat{e}_k, e_{k'})/ \partial e_k =0,\mathrm{otherwise}.
    \end{cases} 
\end{equation*}
\end{theorem}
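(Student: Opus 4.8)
The plan is to treat the agent's utility-maximization problem (Problem 1) as a one-dimensional constrained optimization of $u_k(\cdot, e_{k'}) = \mathrm{Payment}_k(\cdot, e_{k'}) - \mathrm{Cost}_k(\cdot)$ over the compact interval $[0,1]$, holding the randomly drawn peer effort $e_{k'}$ fixed. Since both the payment and cost terms are continuous in $e_k$, the map $e_k \mapsto u_k(e_k, e_{k'})$ is continuous on the compact set $[0,1]$, so the Extreme Value Theorem already guarantees that a maximizer $e_k^*$ exists; the entire content of the theorem is therefore to locate where this maximizer sits.

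First I would record how each piece moves with $e_k$. Because $\delta_k(e_k)$ is non-increasing in $e_k$, the denominator $\Phi\delta_k^2(e_k)+\Phi\delta_{k'}^2(e_{k'})+\Upsilon$ is non-increasing, so the argument $Q/(\Phi\delta_k^2(e_k)+\Phi\delta_{k'}^2(e_{k'})+\Upsilon)$ fed into $f$ is non-decreasing; together with $f$ non-decreasing this makes $\mathrm{Payment}_k$ non-decreasing in $e_k$, and Assumption~\ref{assumption:payment_function-concave} supplies strict concavity of the payment as a function of effort. For the cost, $\mathrm{Cost}_k(e_k)=c\cdot d(|\delta_k(0)-\delta_k(e_k)|)$ is non-decreasing and, under the increasing-marginal-cost shape of $d$, convex in $e_k$, so $-\mathrm{Cost}_k$ is concave. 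A strictly concave function plus a concave function is strictly concave, hence $u_k(\cdot, e_{k'})$ is strictly concave on $[0,1]$ and its derivative $\partial u_k/\partial e_k$ is strictly decreasing, leaving at most one interior stationary point.

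The characterization then follows from standard first-order / KKT reasoning. If $\partial u_k/\partial e_k$ vanishes at an interior $\hat e_k\in(0,1)$, strict concavity forces $\hat e_k$ to be the unique global maximizer, yielding the branch $e_k^*=\hat e_k$. Diminishing returns of the concave payment against the steadily rising cost (the Ninety-ninety-rule intuition that $\delta_k$ collapses for small effort and then flattens) drive $\partial u_k/\partial e_k$ negative before $e_k=1$, which is what rules out a pure upper-corner solution. The remaining branch is governed by the participation (individual-rationality) logic embedded in Problem 2: the agent exerts costly effort only when it strictly pays off, so whenever the best attainable utility satisfies $\max_{e_k\in[0,1]} u_k(e_k,e_{k'})\le 0$ no effort level is worthwhile and the agent defaults to $e_k^*=0$.

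The step I expect to be the main obstacle is establishing strict concavity of the composite utility rigorously, because $\mathrm{Payment}_k$ is a composition $f\!\left(Q/(\Phi\delta_k^2(e_k)+\Phi\delta_{k'}^2(e_{k'})+\Upsilon)\right)$ and concavity of $f$ alone does not survive composition with an arbitrary inner map. I would discharge this by invoking Assumption~\ref{assumption:payment_function-concave}, which posits concavity of the payment directly as a function of effort on $[0,1]$, and by separately arguing convexity of $\mathrm{Cost}_k$ through the scaling function $d$; the delicate point is checking that the assumed shape of $\delta_k(e_k)$ is consistent with both the concavity of the payment and the convexity of the cost, so that their sum is genuinely strictly concave rather than merely unimodal, which is what makes the interior stationary point both unique and a maximizer.
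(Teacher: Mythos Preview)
Your proposal is correct and follows essentially the same approach as the paper: split into the participation case ($\max u_k\le 0\Rightarrow e_k^*=0$) and the interior case, then invoke Assumption~\ref{assumption:payment_function-concave} together with the cost structure to get strict concavity of $u_k(\cdot,e_{k'})$ and read off the optimum from the first-order condition. Your treatment is in fact more careful than the paper's own proof, which simply writes down $\partial u_k/\partial e_k=\partial f/\partial e_k + c\,d'(\delta_k)$, states the two cost regimes, and asserts that under Assumption~\ref{assumption:payment_function-concave} the stationary point gives the optimum, without separately verifying convexity of $\mathrm{Cost}_k$ or ruling out the upper corner.
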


\begin{figure*}[!t]
\begin{minipage}{1\textwidth}
    \centering
    \includegraphics[width=0.7\textwidth]{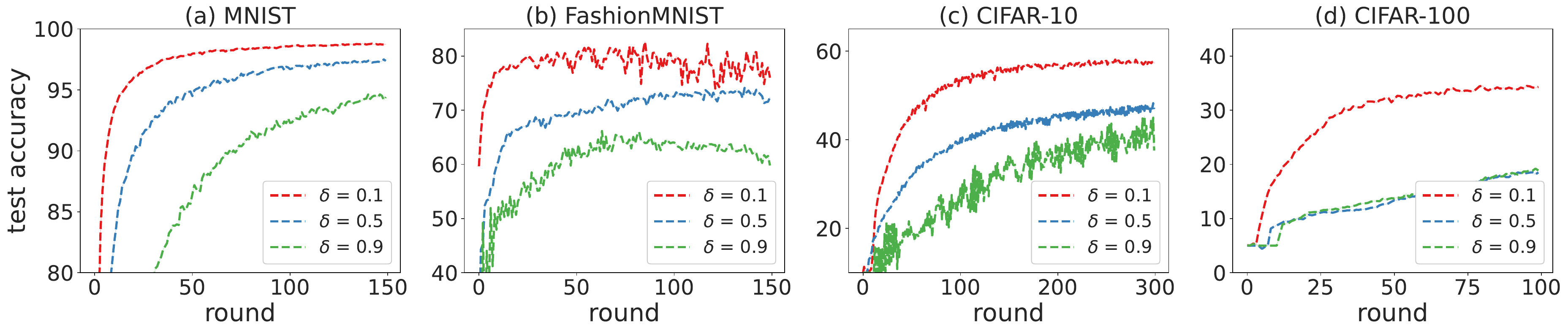}
\end{minipage}
\begin{minipage}{1\textwidth}
    \centering
    \includegraphics[width=0.7\textwidth]{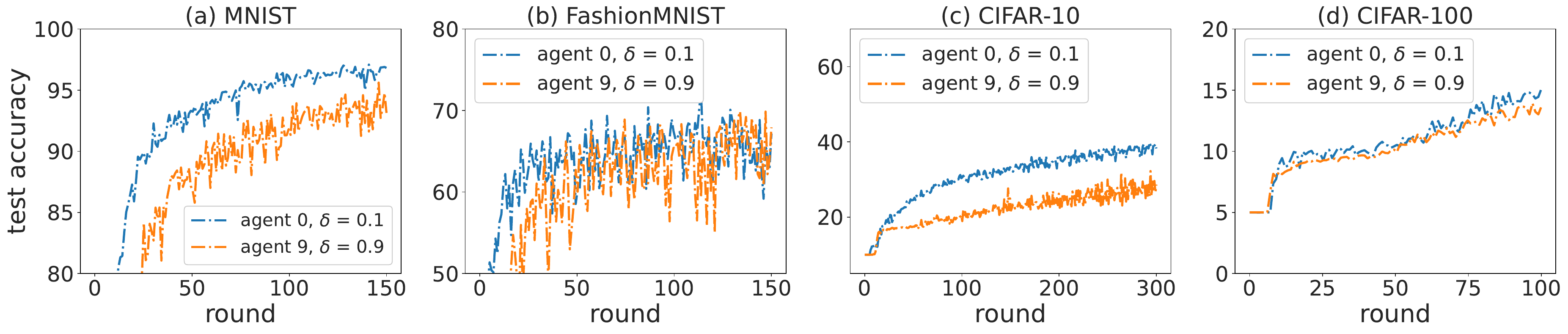}
    \label{fig:performance_gap_iclr_rebuttal_fully_divided}
\end{minipage}
\vspace{-2ex}
\captionof{figure}{\textbf{Upper}: FL training process under different non-iid degrees; \textbf{Lower}: Performance comparison with peers.}
\label{fig:hetero_efforts_and_performance_gap}
\end{figure*}

The subsequent section will demonstrate that the establishment of an equilibrium in this two-stage game is contingent upon the specific nature of the problem's configuration. Specifically, we shall prove that an equilibrium solution prevails when the impact of unilateral deviations in an agent’s effort level remains limited to their own utility.
On the other hand, an equilibrium solution may not exist if infinitesimally small changes to an agent's effort level have an outsized effort on other agents' utilities.
The characteristics are formalized into the following definition.

\begin{definition}
\label{definition:well_behaved_utility_function}
(Well-behaved utility functions \citep{blum2021one}). A set of utility functions $\{u_k: \bm{e}\to \mathbb{R}^{+}| k\in[N] \}$ is considered to be well-behaved over a convex and compact product set $\prod_{k\in[N]}[0,1] \subseteq \mathbb{R}^N$, if and for each agent $k \in [N]$, there are some constants $d_k^1\geq 0 $ and $d_k^2\geq 0$ such that for any $\bm{e}\in \prod_{k\in[N]}[0,1]$,
for all $k' \in [N]$, and $k' \neq k$, $0\leq \partial u_k(\bm{e}) /\partial e_{k'} \leq d_{k^{\prime}}^1$, and $\partial u_k(\bm{e}) /\partial e_k \geq d_k^2$.
\end{definition}

\begin{theorem}\label{theorem:Existence_of_pure_Nash_equilibrium}
(Existence of pure Nash equilibrium). Denote by $\bm{e}^*$ the optimal effort level, if the utility functions $u_k(\cdot)$s are well-behaved over the set $\prod_{k\in[N]}[0,1]$, then there exists a pure Nash equilibrium in effort level $\bm{e}^*$ which for any agent $k$ satisfies,
\begin{equation*}
u_k(e^*_k, \bm{e}^*_{-k}) \geq u_k(e_k, \bm{e}^*_{-k}), \quad \forall  e_k \in [0,1], \forall  k\in [N].
\end{equation*}
\end{theorem}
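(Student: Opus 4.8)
The plan is to cast the second-stage effort game as a fixed-point problem for the agents' joint best-response correspondence and to invoke a standard fixed-point theorem. The joint strategy space $\prod_{k\in[N]}[0,1]$ is nonempty, convex, and compact, so the three ingredients I need are (i) continuity of each $u_k$ in the full profile $\bm{e}$, (ii) quasi-concavity of $u_k(\cdot,\bm{e}_{-k})$ in the agent's own effort $e_k$, and (iii) upper hemicontinuity of the best-response map. With these in hand, the Debreu--Glicksberg--Fan theorem (equivalently, Kakutani's theorem applied to the product of the best-response sets) yields a profile $\bm{e}^*$ that is a fixed point, which is exactly the pure Nash equilibrium asserted in the statement.

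First I would establish continuity. By Definition \ref{definition:well_behaved_utility_function}, every partial derivative $\partial u_k/\partial e_{k'}$ is bounded (between $0$ and $d_{k'}^1$ for $k'\neq k$, and bounded below by $d_k^2\geq 0$ for $k'=k$), so each $u_k$ is Lipschitz and hence jointly continuous on the compact box. Next I would establish concavity in own effort: the utility is $\mathrm{Payment}_k-\mathrm{Cost}_k$, where Assumption \ref{assumption:payment_function-concave} makes $f$ non-decreasing and strictly concave, and the cost term is convex and increasing in $e_k$; composing with the non-increasing map $e_k\mapsto\delta_k(e_k)$ shows $u_k(\cdot,\bm{e}_{-k})$ is quasi-concave, so its maximizer set is nonempty, convex, and compact. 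Theorem \ref{theorem: optimal_effort_level} already supplies the explicit best response---either the corner $e_k=0$ or the interior stationary point $\hat e_k$---which I would use to pin down $B_k(\bm{e}_{-k})$.

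The remaining step is regularity of the best-response correspondence $B_k(\bm{e}_{-k})=\argmax_{e_k\in[0,1]}u_k(e_k,\bm{e}_{-k})$. Because each $u_k$ is jointly continuous and the constraint set $[0,1]$ is fixed, compact, and independent of $\bm{e}_{-k}$, Berge's maximum theorem gives that $B_k$ is upper hemicontinuous, and the concavity from the previous step makes it convex-valued. This is precisely where the limited cross-effect hypothesis $\partial u_k/\partial e_{k'}\leq d_{k'}^1$ does its work: it bounds how sharply agent $k$'s objective can move as peers vary, guaranteeing that the graph of $B_k$ is closed rather than exhibiting the discontinuous jumps that would break the argument (the failure mode flagged before the theorem, where infinitesimal deviations have outsized effects on other agents). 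Taking the product correspondence $\bm{e}\mapsto\prod_{k\in[N]}B_k(\bm{e}_{-k})$ on the convex compact domain and applying Kakutani's fixed-point theorem produces $\bm{e}^*$ with $e_k^*\in B_k(\bm{e}^*_{-k})$ for all $k$, i.e. $u_k(e_k^*,\bm{e}^*_{-k})\geq u_k(e_k,\bm{e}^*_{-k})$ for every $e_k\in[0,1]$ and every $k\in[N]$.

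The main obstacle I expect is step (ii): verifying genuine quasi-concavity of the composite $e_k\mapsto f\bigl(Q/(\Phi\delta_k^2(e_k)+\Phi\delta_{k'}^2(e_{k'})+\Upsilon)\bigr)-\mathrm{Cost}_k(e_k)$ for a general non-increasing $\delta_k(\cdot)$, since concavity of $f$ alone need not survive composition with $\delta_k^2$ and the reciprocal. I would handle this by leaning on the modeling assumption that $\delta_k$ falls off quickly at small effort (the rapid initial decrease discussed after Assumption \ref{assumption:payment_function-concave}) together with convexity of the cost, so that the marginal utility $\partial u_k/\partial e_k$ is decreasing and the first-order condition of Theorem \ref{theorem: optimal_effort_level} identifies a unique peak; if strict concavity is only available piecewise, the weaker single-peakedness needed for Kakutani still suffices. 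Everything else---compactness, continuity, and the closed-graph property---is routine once the well-behaved bounds of Definition \ref{definition:well_behaved_utility_function} are invoked.
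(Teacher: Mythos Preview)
Your proposal is correct and follows the same high-level strategy as the paper: cast the effort game as a fixed-point problem for the best-response map on the compact convex box $\prod_{k\in[N]}[0,1]$ and invoke a fixed-point theorem. The technical machinery differs slightly. The paper first uses Assumption~\ref{assumption:payment_function-concave} (via an auxiliary lemma) to obtain \emph{strict} concavity of $u_k(\cdot,\bm{e}_{-k})$, so the best response $R_k(\bm{e})$ is single-valued; it then applies the Implicit Function Theorem to the first-order condition $\partial u_k/\partial e_k=0$ to show $R_k$ is continuous in $\bm{e}$, and closes with Brouwer's fixed-point theorem. You instead work with the correspondence $B_k$, obtain upper hemicontinuity from Berge's maximum theorem plus convex values from quasi-concavity, and apply Kakutani (equivalently Debreu--Glicksberg--Fan). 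Your route is a bit more robust---it does not require the maximizer to be unique---at the price of invoking a set-valued fixed-point theorem; the paper's route is more elementary once strict concavity is in hand, and it sidesteps the quasi-concavity verification you flag as the main obstacle by getting genuine concavity directly from Assumption~\ref{assumption:payment_function-concave}. Either argument proves the theorem.
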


\begin{remark}
\label{remark: possible_equilibriums}
(Existence of other possible equilibriums)
Note that no one participating is a possible equilibrium in our setting when the marginal cost is too high. In this case, all agents' optimal effort levels will be 0. Another typical free-riding equilibrium does not exist in our setting. 
\end{remark}

\begin{remark}
\label{remark:consistency}
   (The consistency of extending to all agents) Note that inspired by peer prediction, the introduced randomness can serve as an effective tool to circumvent the coordinated strategic behaviors of most agents. However, the consistency of extending to all agents is still satisfied. That is, a consistency exists between randomly selecting one peer agent and the extension that uses the mean of the models.
\end{remark}
Due to space limit, more details about Remark \ref{remark: possible_equilibriums} and Remark \ref{remark:consistency} can be found in Appendix~\ref{appendix:proof_of_theorem_nash_equilibrium}.

\textbf{The learner's best response} We will solve the learner's problem shown in Problem 2 in Stage I. Given the optimal effort level $\bm{e}^*$, the learner only needs to calculate the actual initial payment $Q$ for the payment function.
In that case, the payoff can be expressed as $\mathrm{Payoff} = g\left( \frac{1}{\mathrm{Bound}(\bm{e}^*) }\right)  - T \sum_{k=1}^{N} f(e_k^*, \bm{e}_{-k}^*).$
Combined into Problem 2, the optimal solution can be easily computed using a convex program, for example, the standard solver SLSQP in SciPy \citep{virtanen2020scipy}.

\vspace{-1ex}
\section{Experiments}
\label{sec:experiment}

In this section, we demonstrate existing heterogeneous efforts of non-iid degrees and the performance disparity between agents. We then introduce a logarithmic scoring function to establish an effective incentive mechanism. This mechanism's efficacy is rigorously tested, emphasizing its role in maintaining an equilibrium where no agent can decrease their effort to achieve a higher utility.

\begin{figure*}[!t]
\begin{minipage}{1\textwidth}
    \centering
    \includegraphics[width=0.7\textwidth]{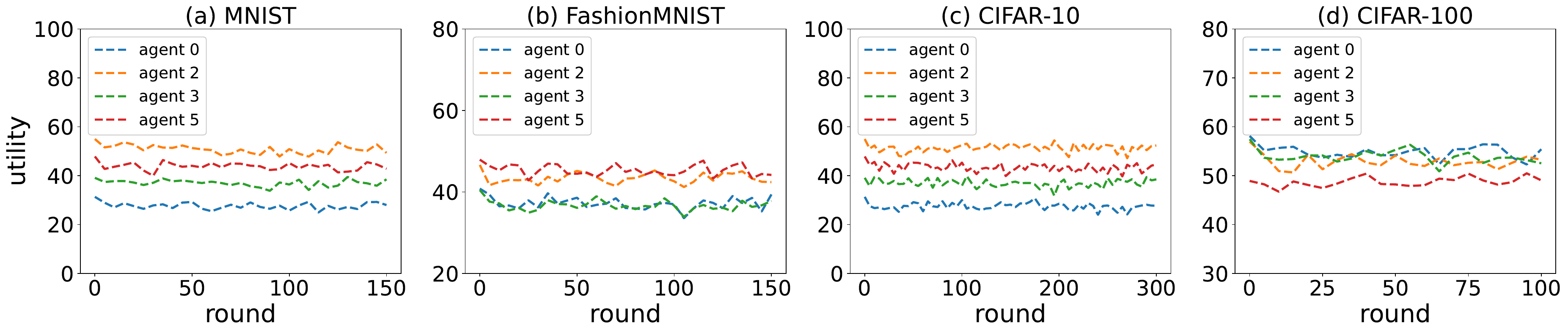}
\end{minipage}
\begin{minipage}{1\textwidth}
    \centering
    \includegraphics[width=0.7\textwidth]{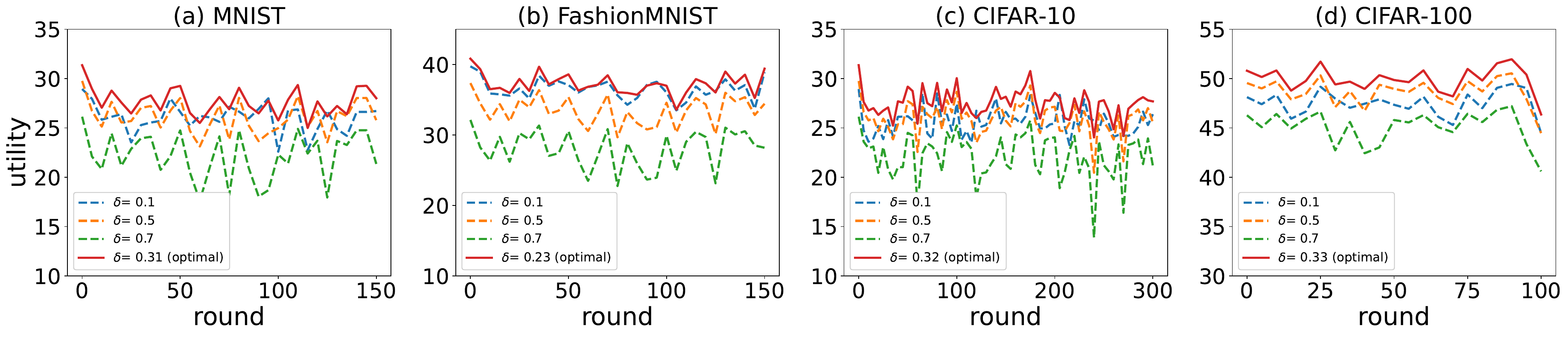}
\end{minipage}
\vspace{-2ex}
\captionof{figure}{\textbf{Upper}: Utility variation across different agents; \textbf{Lower}: Utility variation under different non-iid degrees. }
\label{fig:utility_compare}
\end{figure*}

\textbf{Experimental setup} 
We adopt the widely utilized federated averaging algorithm (FedAvg) \citep{mcmahan2017communication} to illustrate the impact of non-iid degrees on effort and incentive issues.
We evaluate the performances of several CNN models on four class-balanced image classification datasets: MNIST \citep{lecun1998gradient}, FashionMNIST \citep{xiao2017fashion}, CIFAR-10 \citep{krizhevsky2009learning}, and CIFAR-100.
We evenly partition the whole dataset for agents, and then agents will randomly pick a fixed number of data samples (i.e., 500 or 600) for each round that agents used to train models. 
Given $N$ agents who fully participate in the training process, the agent will continuously re-sample the data points with a fixed sample size for each round. 
To simulate the statistical heterogeneity, we carefully construct local datasets for agents by varying the corresponding numbers of distinct classes according to the majority-minority rule. That is, for each agent, a proportion of $\delta$ data samples is from a differed majority class, while the remaining samples are distributed across the rest of the classes following a long-tail distribution \citep{dai2023tackling}. In particular,  we will employ long-tail distributions to determine the data proportions of minority classes, excluding the majority class.
For instance, $\delta=0.8$ means that $80$\% of the data samples are from one class (label), and the remaining $20$\% belong to the other classes (labels). 
In this way, we can use this metric $\delta$ to measure the non-iid degree, capturing the idea of Wasserstein distance.

\subsection{Heterogeneous Efforts on Non-iid Degrees}
\label{subsec: performance_results_non_iid}

\textbf{Heterogeneous efforts}
By default, we use a long-tail distribution to determine the data proportions of minority classes, excluding the majority class.  Figure \ref{fig:hetero_efforts_and_performance_gap}
demonstrates the performances of three instantize distributions with different $\delta_k$s on four datasets, respectively. 
In Figure \ref{fig:hetero_efforts_and_performance_gap}, it is apparent that the degree of non-iid has a substantial impact on the actual performance. 
The greater the heterogeneity in the data, the worse the performance. The default non-iid degree is 0.5. Specifically, $\delta =0.1$ corresponds to the iid setting for MNIST, FashionMNIST, and CIFAR-10 datasets, where we equally distribute the remaining samples into other classes.

\textbf{Performance comparison with peers}
We depict the test accuracy of the local models from two agents with different fixed non-iid degrees to show the performance gap between them caused by statistical heterogeneity. In particular, we set the non-iid degree of two agents as $0.1$ and $0.9$, respectively.
Figure \ref{fig:hetero_efforts_and_performance_gap} illustrates the existing performance gap between two agents. The presence of the performance gap can be attributed to the heterogeneous efforts of the data, thereby indicating the feasibility of utilizing it as an effective measure for designing payment functions.

Note that many works prefer using the number of classes as a non-iid metric for data partitioning \citep{yang2022anarchic}, that is, using the number of classes $p$ within each client's local dataset.
The additional results of this popular non-iid setting (the number of class) are provided in Appendix~\ref{appendix:performance_result_using_other_metrics}.

\subsection{Implementing Incentive Mechanism}

\textbf{Scoring function} We first present a  scoring structure as the payment function $f(\cdot)$: a logarithmic function with natural base $e$.  For simplicity, the cost functions are designed as classical linear functions. Please refer to Appendix~\ref{appendix:scoring_function} for more details. We can derive and obtain the optimal effort level for logarithmic functions, which can be generally rewritten into a general form $\delta_k = f(\delta_{k'}^2) = \frac{1}{c} \pm \sqrt{\frac{1}{c^2} - \delta_{k'}^2 - \frac{\Upsilon}{\Phi}}$, where $\delta_{k'}$ is the non-iid degree of a randomly selected peer.

\textbf{Parameter analysis} From Theorem \ref{thm:performance_gap_test}, payment function mainly depends on two parameters $\Phi$ and $\Upsilon$, which both depend on learning rate $\eta$, Lipschitz constant $L$ and gradient bound $G$. Recall that we set the learning rate $\eta = 0.01$ as a constant, {\em i.e.}, $\eta_t = \eta, \forall t$. 
The detailed analysis of Lipschitz constant $L$ and gradient bound $G$ are presented in Appendix~\ref{appendix:scoring_function}. We can obtain that $\Phi \in [300, 30000]$ and $\Upsilon \in [200, 20000]$, respectively. Then, we set the effort-distance function $\delta_k(e_k)$ as an exponential function here: $\delta_k(e_k) = \exp{(-e_k)}$.

\textbf{The existence of equilibrium} 
From the theoretical aspect (Theorem \ref{theorem:Existence_of_pure_Nash_equilibrium}), a Nash equilibrium in our settings means that no agent can improve their utility by unilaterally changing their invested effort levels. Hence, the utility of agents could be viewed as an indicator of the equilibrium. If the utility of agents remains almost constant as the training progresses, an equilibrium is reached. 
In this part, we aim to evaluate the existence of equilibrium through the examination of utility. As shown in Figure \ref{fig:utility_compare}, the utilities of four randomly selected agents remain stable, which indicates an equilibrium and also confirms Theorem \ref{theorem:Existence_of_pure_Nash_equilibrium}. Note that the existing fluctuation of utility mainly results from the randomness of selecting peers.

\textbf{Impact on utility under different non-iid levels} We further give a detailed example in Figure \ref{fig:utility_compare},  to illustrate the impact shown in utility if anyone unilaterally increases or decreases his invested effort level. More specifically, we present the possible results of a specific single client when he changes his invested effort levels, however, other clients hold their optimal invested effort levels. To mitigate the impact of randomness, we calculate the average values of the utilities within 5 communication rounds. Even though the randomness of selecting peers brings fluctuation, the utility achieved by the optimal effort levels is much larger than others in expectation.

\section{Conclusion}
\label{conclusion}

In this paper, we quantify the non-iid degree via the Wasserstein distance, and are the first to prove its significance in the convergence bounds of FL. By utilizing this non-iid metric, we explicitly illustrate the potential connections between the learner and agents regarding convergence results and the generalization error gap. The theoretical findings help us fill the research gap where the current incentive methods focus solely on sample size while overlooking the potential incurred data heterogeneity.
Therefore, unlike current game-theoretic methods, our proposed mechanism accounts for the impact of data heterogeneity on data contribution and guarantees truthful reporting from agents through a strict Nash equilibrium. Experiments on real-world datasets validate the efficacy of our mechanism.

\bibliographystyle{named}
\bibliography{references}

\newpage
\appendix
\onecolumn
\section*{Appendix} \label{appendix}

\section{Omitted Proofs}
\label{sec:omitted_proofs_of_convergence}
Before presenting the proofs, we first introduce some annotations.

\subsection{Annotation}
\label{subsec:annotaion}
Here, we rewrite the expected loss over $k$-th agent's dataset $\mathcal{D}_k$ by splitting samples according to their labels. Accordingly,  the local objective $F_k(\bm{w})$ can be reformulated as:
\begin{equation}
    F_k(\mathbf{w}) \triangleq\mathbb{E}_{\zeta^k \in \mathcal{D}_k}[\ell(\mathbf{w}, \zeta^k)] = \mathbb{E}_{(\bm{x},y) \sim p^{(k)}}\bigg[\sum_{i=1}^{I}\ell_i(\mathbf{w}, \bm{x})\bigg] = \sum_{i=1}^{I}p^{(k)}(y=i)\mathbb{E}_{\bm{x}|y=i} [\ell(\mathbf{w}, \bm{x})].
\end{equation}
where $\ell(\cdot, \cdot)$ represents the typical loss function, and datapoint $\zeta^k= (\bm{x}, y)$ uniformly sampled from dataset $ \mathcal{D}_k$, $p^{(k)}(y=i)$ refers to denotes the fraction of datapoints in agent $k$'s local dataset (distribution) that are labeled as $y=i$, the subscript $\bm{x}|y=i$ indicates all datapoints for which the labels are $i$.
 Suppose that the model requires T rounds (epochs) to achieve convergence.
 
Then, for each round (epoch) $t $ on agent $k$, its local optimizer performs SGD as the following:
\begin{equation}
   \mathbf{w}_t^{k} = \mathbf{w}_{t-1}^{k} - \eta_t \sum_{i=1}^{I}p^{(k)}(y=i) \nabla_{\mathbf{w}} \mathbb{E}_{\bm{x}|y=i} [\ell(\mathbf{w}_{t-1}^{k}, \bm{x})].
\end{equation}
Let's assume that the learner aggregates local updates from agents after every $E$ step. For instance, the aggregated weight at the $m$-th synchronization is represented by $\overline{\mathbf{w}}_{mE}$.   For convenience, we introduce a virtual aggregated sequence $\overline{\mathbf{w}}_{t} = \sum_{k=1}^{N} p_k \mathbf{w}_{t}^{k}$. The discrepancy between the iterates from agents' average $\overline{\mathbf{w}}_t$ over a single iteration can be quantified using the term $||\overline{\mathbf{w}}_t - \mathbf{w}_t^k||$.  The following Theorem \ref{lemma: bound_divergence} will show the upper bound of $||\overline{\mathbf{w}}_t - \mathbf{w}_t^k||$ according to $E$ synchronization interval.

\subsection{Proof of Theorem \ref{lemma: bound_divergence}}
\label{appendix:full_proof_lemma_bound_divergence}

\textbf{Theorem \ref{lemma: bound_divergence}}
(Bounded the divergence of $\{\mathbf{w}_t^k\}$). Let Assumption \ref{assumption:bounded_gradient} hold and $G$ be defined therein, given the synchronization interval $E$ (local epochs), the learning rate  $\eta_t$. Suppose that  $\nabla_{\mathbf{w}} \mathbb{E}_{\bm{x}|y=i} [\ell_i(\bm{x},\mathbf{w})]$ is $L_{\bm{x}|y=i}$-Lipschitz, it follows that
\begin{equation*}
\begin{split}
    &\mathbb{E}[\sum_{k=1}^{N}p_k||\overline{\mathbf{w}}_t - \mathbf{w}_t^k||^2] \leq 16 (E-1)  G^2 \eta_t^2(1+2\eta_t)^{2(E-1)} 
    \sum_{k=1}^{N} p_k \underbrace{\bigg(\sum_{i=1}^{I}|p^{(k)}(y=i) - p^{(c)}(y=i)|\bigg)^2}_{4\delta_k^2}
\end{split}
\end{equation*}

\textit{Proof}. Without loss of generality, suppose that $mE \leq t < (m+1)E$, where $m \in \mathbb{Z}^{+}$. Given the definitions of $\overline{\mathbf{w}}_{t}$ and $\mathbf{w}_{t}^k$ , we have
\begin{equation*}
    \begin{split}
     & ||\overline{\mathbf{w}}_t - \mathbf{w}_t^k|| \\
     &= ||\sum_{k'=1}^{N}p_{k'} \mathbf{w}_t^{k'} - \mathbf{w}_t^k|| \\
     &= ||\sum_{k'=1}^{N}p_{k'} (\mathbf{w}_{t-1}^{k'} - \eta_t \sum_{i=1}^{I}p^{(k')}(y=i) \nabla_{\mathbf{w}} \mathbb{E}_{\bm{x}|y=i} [\ell(\mathbf{w}_{t-1}^{k'}, \bm{x})] )- \mathbf{w}_{t-1}^{k} +\eta_t \sum_{i=1}^{I}p^{(k)}(y=i) \nabla_{\mathbf{w}} \mathbb{E}_{\bm{x}|y=i} [\ell(\mathbf{w}_{t-1}^{k}, \bm{x})]|| \\
     & \leq || \sum_{k'=1}^{N}p_{k'} \mathbf{w}_{t-1}^{k'} - \mathbf{w}_{t-1}^{k}|| \\
     & \quad \quad+ \eta_t||\sum_{k'=1}^{N}p_{k'}\sum_{i=1}^{I}p^{(k')}(y=i) \nabla_{\mathbf{w}} \mathbb{E}_{\bm{x}|y=i} [\ell(\mathbf{w}_{t-1}^{k'}, \bm{x})]  
       - \sum_{i=1}^{I}p^{(k)}(y=i) \nabla_{\mathbf{w}} \mathbb{E}_{\bm{x}|y=i} [\ell(\mathbf{w}_{t-1}^{k}, \bm{x})] || \\
     & \quad\quad(\text{from triangle inequality}) \\
     & \leq || \sum_{k'=1}^{N}p_{k'} \mathbf{w}_{t-1}^{k'} - \mathbf{w}_{t-1}^{k}|| \\
     & \quad \quad+ \eta_t \sum_{k'=1}^{N}p_{k'} ||\sum_{i=1}^{I}p^{(k')}(y=i) \nabla_{\mathbf{w}} \mathbb{E}_{\bm{x}|y=i} [\ell(\mathbf{w}_{t-1}^{k'}, \bm{x})]  
       - \sum_{i=1}^{I}p^{(k)}(y=i) \nabla_{\mathbf{w}} \mathbb{E}_{\bm{x}|y=i} [\ell(\mathbf{w}_{t-1}^{k}, \bm{x})] || \\
     & \leq \sum_{k'=1}^{N}p_{k'} ||  \mathbf{w}_{t-1}^{k'} - \mathbf{w}_{t-1}^{k}|| \\
     & \quad \quad+ \eta_t \sum_{k'=1}^{N}p_{k'} ||\sum_{i=1}^{I}p^{(k')}(y=i) \cdot [\nabla_{\mathbf{w}} \mathbb{E}_{\bm{x}|y=i} [\ell(\mathbf{w}_{t-1}^{k'}, \bm{x})] - \nabla_{\mathbf{w}} \mathbb{E}_{\bm{x}|y=i} [\ell(\mathbf{w}_{t-1}^{k}, \bm{x})]] || \\
     & \quad \quad+ \eta_t \sum_{k'=1}^{N}p_{k'} || \sum_{i=1}^{I}[p^{(k')}(y=i) - p^{(k)}(y=i)] \cdot \nabla_{\mathbf{w}} \mathbb{E}_{\bm{x}|y=i} [\ell(\mathbf{w}_{t-1}^{k}, \bm{x})] || \\
    & \leq \sum_{k'=1}^{N}p_{k'} (1 + \eta_t  \sum_{i=1}^{I}p^{(k')}(y=i)L_{\bm{x}|y=i}) ||  \mathbf{w}_{t-1}^{k'} - \mathbf{w}_{t-1}^{k}|| \\
     & \quad \quad+ \eta_t \sum_{k'=1}^{N}p_{k'} g_{max}(\mathbf{w}_{t-1}^k) \sum_{i=1}^{I}|p^{(k')}(y=i) - p^{(k)}(y=i)|.  \\
    \end{split}
\end{equation*}
Recall that $p_k$ indicates the weight of the $k$-th agent such that $p_k \geq 0$ and $\sum_{k=1}^{N}p_k =1$, which follows FedAvg \citep{mcmahan2017communication}. The last inequality holds because we assume that $\nabla_{\mathbf{w}} \mathbb{E}_{\bm{x}|y=i} [\ell(\mathbf{w}_{t}^{k}, \bm{x})]$ is $L_{\bm{x}|y=i}$-Lipschitz, {\em i.e.}, $|| \nabla_{\mathbf{w}} \mathbb{E}_{\bm{x}|y=i} [\ell(\mathbf{w}_{t}^{k'}, \bm{x})] -\nabla_{\mathbf{w}} \mathbb{E}_{\bm{x}|y=i} [\ell(\mathbf{w}_{t}^{k}, \bm{x})] || \leq L_{\bm{x}|y=i} ||\mathbf{w}_{t}^{k'}-\mathbf{w}_{t}^{k}||$, and denote $g_{max}(\mathbf{w}_{t}^{k}) = \max_{i=1}^{I} ||\nabla_{\mathbf{w}} \mathbb{E}_{\bm{x}|y=i} [\ell(\mathbf{w}_{t}^{k}, \bm{x})]||$.

Subsequently, the term $|| \mathbf{w}_{t-1}^{k'} - \mathbf{w}_{t-1}^{k}||$ illustrated in the above inequality can be further deduced as follows.
\begin{equation*}
    \begin{split}
         & ||  \mathbf{w}_{t-1}^{k'} - \mathbf{w}_{t-1}^{k}|| \\
         & = ||  \mathbf{w}_{t-2}^{k'} -  \eta_{t-1} \sum_{i=1}^{I}p^{(k')}(y=i) \nabla_{\mathbf{w}} \mathbb{E}_{\bm{x}|y=i} [\ell(\mathbf{w}_{t-2}^{k'}, \bm{x})]   - \mathbf{w}_{t-2}^{k} +  \eta_{t-1} \sum_{i=1}^{I}p^{(k)}(y=i) \nabla_{\mathbf{w}} \mathbb{E}_{\bm{x}|y=i} [\ell(\mathbf{w}_{t-2}^{k}, \bm{x})]||\\
        & \leq || \mathbf{w}_{t-2}^{k'} -  \mathbf{w}_{t-2}^{k}|| \\
        & \quad + \eta_{t-1} || \sum_{i=1}^{I}p^{(k')}(y=i) \nabla_{\mathbf{w}} \mathbb{E}_{\bm{x}|y=i} [\ell(\mathbf{w}_{t-2}^{k'}, \bm{x})] - \eta_{t-1} \sum_{i=1}^{I}p^{(k)}(y=i) \nabla_{\mathbf{w}} \mathbb{E}_{\bm{x}|y=i} [\ell(\mathbf{w}_{t-2}^{k}, \bm{x})]||\\ 
        & \leq || \mathbf{w}_{t-2}^{k'} -  \mathbf{w}_{t-2}^{k}||  + \eta_{t-1} \sum_{i=1}^{I}p^{(k')}(y=i) ||\nabla_{\mathbf{w}} \mathbb{E}_{\bm{x}|y=i} [\ell(\mathbf{w}_{t-2}^{k'}, \bm{x})] - \nabla_{\mathbf{w}} \mathbb{E}_{\bm{x}|y=i} [\ell(\mathbf{w}_{t-2}^{k}, \bm{x})]|| \\
        & \quad \quad + \eta_{t-1} g_{max}(\mathbf{w}_{t-2}^{k}) \sum_{i=1}^{I}|p^{(k')}(y=i) - p^{(k)}(y=i)| \\
        & \leq  (1 + \eta_{t-1}  \sum_{i=1}^{I}p^{(k')}(y=i)L_{\bm{x}|y=i}) ||  \mathbf{w}_{t-2}^{k'} - \mathbf{w}_{t-2}^{k}||  + \eta_{t-1} g_{max}(\mathbf{w}_{t-2}^{k}) \sum_{i=1}^{I}|p^{(k')}(y=i) - p^{(k)}(y=i)|.
    \end{split}
\end{equation*}

Here, we make a mild assumption that $L = L_{\bm{x}|y=i} = L_{\bm{x}|y=i'}, \forall i,i'$, implying that the Lipschitz-continuity remains the same regardless of the classes of the samples.  We denote $\alpha_t = 1 + \eta_t L $ for simplicity. Plugging  $||  \mathbf{w}_{t-1}^{k'} - \mathbf{w}_{t-1}^{k}|| $ into the expression of $||\overline{\mathbf{w}}_t - \mathbf{w}_t^k||$, by induction, we have,
\begin{equation*}
    \begin{split}
     & ||\overline{\mathbf{w}}_t - \mathbf{w}_t^k|| \\
     & \leq \sum_{k'=1}^{N} p_{k'} \alpha_t ||  \mathbf{w}_{t-1}^{k'} - \mathbf{w}_{t-1}^{k}|| + \eta_t \sum_{k'=1}^{N}p_{k'} g_{max}(\mathbf{w}_{t-1}^k) \sum_{i=1}^{I}|p^{(k')}(y=i) - p^{(k)}(y=i)|  \\
    & \leq \sum_{k'=1}^{N} p_{k'} \alpha_t \alpha_{t-1} ||  \mathbf{w}_{t-2}^{k'} - \mathbf{w}_{t-2}^{k}|| + \eta_t \sum_{k'=1}^{N} p_{k'} g_{max}(\mathbf{w}_{t-1}^k)\sum_{i=1}^{I}|p^{(k')}(y=i) - p^{(k)}(y=i)|  \\
    & \quad \quad +  \eta_{t-1} \sum_{k'=1}^{N} p_{k'}\alpha_t g_{max}(\mathbf{w}_{t-2}^k)\sum_{i=1}^{I}|p^{(k')}(y=i) - p^{(k)}(y=i)|\\
    & \leq \sum_{k'=1}^{N} p_{k'}  \prod_{t'=t_0}^{t-1} \alpha_{t'+1}||  \mathbf{w}_{t_0}^{k'} - \mathbf{w}_{t_0}^{k}||  \\
    & \quad \quad + \sum_{k'=1}^{N} p_{k'} \sum_{t'=t_0}^{t-1}\eta_{t'+1} \prod_{t''=t'+1}^{t-1}\alpha_{t''+1} g_{max}(\mathbf{w}_{t'}^k)\sum_{i=1}^{I}|p^{(k')}(y=i) - p^{(k)}(y=i)|. \\
    \end{split}
\end{equation*}

Recall that $mE \leq t < (m+1)E$. When $t_0 = t-t' = mE$, for any $k',k \in \mathcal{N}$, $\overline{\mathbf{w}}_{t_0} = \mathbf{w}_{t_0}^{k'} = \mathbf{w}_{t_0}^{k}$. In this case, the first term of the last inequality $ \sum_{k'=1}^{N} p_{k'}  \prod_{t'=t_0}^{t-1} \alpha_{t'+1} ||  \mathbf{w}_{t'}^{k'} - \mathbf{w}_{t'}^{k}||$ equals $0$. Then, we have,
\begin{equation}
    \label{eq:divergence_1}
    \begin{split}
        &\mathbb{E}[\sum_{k=1}^{N}p_{k}||\overline{\mathbf{w}}_t - \mathbf{w}_t^k||^2 ] \\
        & = \sum_{k=1}^{N}p_{k} \mathbb{E}[||\overline{\mathbf{w}}_t - \mathbf{w}_t^k||^2]\\
        &\leq \sum_{k=1}^{N}p_{k} \mathbb{E}[||  \sum_{k'=1}^{N} p_{k'}  \sum_{i=1}^{I}|p^{(k')}(y=i) - p^{(k)}(y=i)| \sum_{t'=t_0}^{t-1}\eta_{t'+1} \prod_{t''=t'+1}^{t-1}\alpha_{t''+1}g_{max}(\mathbf{w}_{t'}^k) ||^2] \\
        &\leq  \sum_{k=1}^{N} p_{k}  \sum_{k'=1}^{N} p_{k'} \bigg(\sum_{i=1}^{I}|p^{(k')}(y=i) - p^{(k)}(y=i)|\bigg)^2 \sum_{t'=t_0}^{t-1}\eta_{t'+1}^2 \prod_{t''=t'+1}^{t-1}\alpha_{t''+1}^2 \mathbb{E}[||g_{max}(\mathbf{w}_{t'}^k) ||^2] \\
        &\leq (E-1)  G^2 \eta_{t_0}^2 \alpha_{t_0}^{2(E-1)}  \sum_{k=1}^{N} \sum_{k'=1}^{N} p_{k} p_{k'} \bigg(\sum_{i=1}^{I}|p^{(k')}(y=i) - p^{(k)}(y=i)|\bigg)^2  \\
        &\leq 4 (E-1)  G^2 \eta_t^2(1+2\eta_t L)^{2(E-1)}   \sum_{k=1}^{N} \sum_{k'=1}^{N} p_{k} p_{k'} \bigg(\sum_{i=1}^{I}|p^{(k')}(y=i) - p^{(k)}(y=i)|\bigg)^2.   \\
    \end{split}
\end{equation}
where in the last inequality, $\mathbb{E}[||g_{max}(\mathbf{w}_{t'}^k) ||^2] \leq G^2$ because of Assumption \ref{assumption:bounded_gradient}, and we use $\eta_{t_0}\leq 2\eta_{t_0+E} \leq 2\eta_t$ for $t_0 \leq t \leq t_0 +E$.
In order to show the degree of non-iid, we rewrite the last term $\sum_{k=1}^{N} \sum_{k'=1}^{N} p_{k} p_{k'} \bigg(\sum_{i=1}^{I}|p^{(k')}(y=i) - p^{(k)}(y=i)|\bigg)^2$ as follows.
\begin{equation}
    \label{eq:divergence_2}
    \begin{split}
        & \sum_{k=1}^{N}\sum_{k'=1}^{N} p_{k} p_{k'}\bigg(\sum_{i=1}^{I}|p^{(k')}(y=i) - p^{(k)}(y=i)|\bigg)^2  \\
        & \leq  2\sum_{k=1}^{N}\sum_{k'=1}^{N} p_{k} p_{k'}\bigg(\sum_{i=1}^{I}|p^{(k')}(y=i) - p^{(c)}(y=i)|\bigg)^2 + 2\sum_{k=1}^{N}\sum_{k'=1}^{N} p_{k} p_{k'}\bigg(\sum_{i=1}^{I}|p^{(k)}(y=i) - p^{(c)}(y=i)|\bigg)^2 \\
        & = 2\sum_{k'=1}^{N}  p_{k'}\bigg(\sum_{i=1}^{I}|p^{(k')}(y=i) - p^{(c)}(y=i)|\bigg)^2 + 2\sum_{k=1}^{N} p_{k}\bigg(\sum_{i=1}^{I}|p^{(k)}(y=i) - p^{(c)}(y=i)|\bigg)^2 \\
        & = 4\sum_{k=1}^{N} p_{k} \bigg(\sum_{i=1}^{I}|p^{(k)}(y=i) - p^{(c)}(y=i)|\bigg)^2 \\
        &= 16 \sum_{k=1}^{N} p_{k} \delta_k^2
    \end{split}
\end{equation}
where in the first inequality, we use the fact that $||x+y||^2 \leq 2||x||^2 + 2||y||^2$, and $p^{(c)}$ represents the actual reference data distribution in the centralized setting.

Therefore, substituting the term in Eq. (\ref{eq:divergence_2}) into Eq. (\ref{eq:divergence_1}), we have,
\begin{equation*}
    \mathbb{E}[\sum_{k=1}^{N}p_{k}||\overline{\mathbf{w}}_t - \mathbf{w}_t^k||^2 ] \leq   
    64 (E-1)  G^2 \eta_t^2(1+2\eta_t L)^{2(E-1)} 
    \sum_{k=1}^{N} p_{k} \delta_k^2.
\end{equation*}
\qed

\subsection{Proof of Theorem \ref{thm:performance_gap_test}}
\label{Appendix:proof_of_theorem_4.3}

\textbf{Theorem \ref{thm:performance_gap_test}} 
Suppose that the expected loss function $F_c(\cdot)$ also follows Assumption \ref{assumption:lipschitz}, then the upper bound of the generalization loss gap between agent $k$ and $k'$ is
\begin{equation}
         F_c(\mathbf{w}^k)- F_c(\mathbf{w}^{k'}) \leq 
          \Phi \delta_k^2 + \Phi \delta_{k'}^2 + \Upsilon 
\end{equation}
where  $\Phi = 16 L^2  G^2 \sum_{t=0}^{E-1}(\eta_{t}^{2}(1+2\eta_{t}^2 L^2))^{t} $, $ \Upsilon = \prod_{t=0}^{E-1}(1-2\eta_t L)^{t} \frac{2G^2L}{\mu^2} + \frac{L G^2}{2}\sum_{t=0}^{E-1}(1-2\eta_t L)^{t} \eta_t^2$, and $F_c(\mathbf{w}) \triangleq \mathbb{E}_{z\in \mathcal{D}_c}[l(\mathbf{w},z)]$ denotes the generalization loss induced when the model $\mathbf{w}$ is tested at the dataset $\mathcal{D}_c$.

\begin{proof}
Due to the L-smoothness, we have
\begin{equation*}
    F_c(\mathbf{w}^k)- F_c(\mathbf{w}^{k'}) \leq \langle \nabla F_c(\mathbf{w}^{k'}), \mathbf{w}^{k} - \mathbf{w}^{k'} \rangle  + \frac{L}{2} \lVert \mathbf{w}^{k} - \mathbf{w}^{k'} \rVert^2.
\end{equation*}

By Cauchy-Schwarz inequality and AM-GM inequality, we have
\begin{equation*}
    \langle \nabla F_c(\mathbf{w}^{k'}), \mathbf{w}^{k} - \mathbf{w}^{k'} \rangle  \leq  \frac{L}{2} \lVert \mathbf{w}^{k} - \mathbf{w}^{k'} \rVert^2 + \frac{1}{2L} \lVert  \nabla F_c(\mathbf{w}^{k'}) \rVert^2.
\end{equation*}

Then, due to the L-smoothness of $F_c(\cdot)$ (Assumption \ref{assumption:lipschitz}), we can get a variant of Polak-Łojasiewicz inequality, which follows 
\begin{equation*}
    \lVert  \nabla F_c(\mathbf{w}^{k'}) \rVert^2 \leq 2L( F_c(\mathbf{w}^{k'}) - F_c^*) \leq L^2 \lVert \mathbf{w}^{k'} - \mathbf{w}^{*} \rVert^2.
\end{equation*}

Therefore, we have 
\begin{equation*}
    F_c(\mathbf{w}^k)- F_c(\mathbf{w}^{k'}) \leq L \underbrace{\lVert \mathbf{w}^{k} - \mathbf{w}^{k'} \rVert^2}_{\text{Lemma \ref{lemma:diff_w_i-w_j}} } + \frac{L}{2} \underbrace{\lVert \mathbf{w}^{k'} - \mathbf{w}^{*} \rVert^2}_{\text{Lemma  \ref{lemma:diff_w_i-w}}}.
\end{equation*}

Combined with  Lemma \ref{lemma:diff_w_i-w_j} and Lemma \ref{lemma:diff_w_i-w}, we complete the proof.
\end{proof}

\subsection{Proof of Lemma \ref{lemma:diff_w_i-w_j}}

\begin{lemma}
\label{lemma:diff_w_i-w_j}
Suppose Assumptions \ref{assumption:lipschitz} to \ref{assumption:local_variance} hold, then we have
\begin{equation*}
    \lVert \mathbf{w}^{k} - \mathbf{w}^{k'} \rVert^2 \leq 16 L  G^2 \sum_{t=0}^{E-1}(\eta_{t}^{2}(1+2\eta_{t}^2 L^2))^{t}  \bigg(  \delta_k^2 + \delta_{k'}^2 \bigg).
\end{equation*}
\end{lemma}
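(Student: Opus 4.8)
The proof runs in close parallel to that of Theorem~\ref{lemma: bound_divergence}, except that it tracks the squared distance between the two agents' iterates directly rather than the divergence of each iterate from the mean. The plan is to set up a one-step recursion for $\lVert \mathbf{w}_t^k - \mathbf{w}_t^{k'}\rVert^2$, separate the part controlled by Lipschitz-continuity of the gradients from the part generated by the mismatch of the two label distributions, and then unroll from the common starting point of the one-shot round.

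First I would use that in the one-shot setting both agents are initialized at the same broadcast model, so $\mathbf{w}_0^k = \mathbf{w}_0^{k'}$ and hence $\lVert \mathbf{w}_0^k - \mathbf{w}_0^{k'}\rVert = 0$; this is exactly what makes the leading homogeneous term of the unrolled recursion vanish. Writing the local SGD step from the Annotation, I would form
\[
\mathbf{w}_t^k - \mathbf{w}_t^{k'} = (\mathbf{w}_{t-1}^k - \mathbf{w}_{t-1}^{k'}) - \eta_t\Big[\textstyle\sum_i p^{(k)}(y=i)\nabla_{\mathbf{w}}\mathbb{E}_{\bm{x}|y=i}[\ell(\mathbf{w}_{t-1}^k,\bm{x})] - \sum_i p^{(k')}(y=i)\nabla_{\mathbf{w}}\mathbb{E}_{\bm{x}|y=i}[\ell(\mathbf{w}_{t-1}^{k'},\bm{x})]\Big],
\]
and then add and subtract $\sum_i p^{(k)}(y=i)\nabla_{\mathbf{w}}\mathbb{E}_{\bm{x}|y=i}[\ell(\mathbf{w}_{t-1}^{k'},\bm{x})]$ inside the bracket. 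This decomposes the update into (i) a gradient difference evaluated against a common distribution $p^{(k)}$, whose norm is at most $L\lVert\mathbf{w}_{t-1}^k-\mathbf{w}_{t-1}^{k'}\rVert$ by the per-class Lipschitz assumption (taking $L=L_{\bm{x}|y=i}$ for all $i$ and using $\sum_i p^{(k)}(y=i)=1$), and (ii) a distributional-mismatch term $\sum_i[p^{(k)}(y=i)-p^{(k')}(y=i)]\nabla_{\mathbf{w}}\mathbb{E}_{\bm{x}|y=i}[\ell(\mathbf{w}_{t-1}^{k'},\bm{x})]$, whose norm is at most $g_{\max}\sum_i|p^{(k)}(y=i)-p^{(k')}(y=i)|$ with $g_{\max}^2\le G^2$ by Assumption~\ref{assumption:bounded_gradient}.

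Next I would square and apply $\lVert a+b\rVert^2\le 2\lVert a\rVert^2+2\lVert b\rVert^2$ (together with a further split of the gradient term) to obtain a recursion for $X_t=\lVert\mathbf{w}_t^k-\mathbf{w}_t^{k'}\rVert^2$ whose amplification factor has the form $(1+2\eta_t^2 L^2)$ and whose inhomogeneous part is proportional to $\eta_t^2 G^2\big(\sum_i|p^{(k)}(y=i)-p^{(k')}(y=i)|\big)^2$. The distributional gap is then routed through the reference distribution $p^{(c)}$ by the triangle inequality, $\sum_i|p^{(k)}(y=i)-p^{(k')}(y=i)|\le 2\delta_k+2\delta_{k'}$, and squared via $(a+b)^2\le 2a^2+2b^2$ to produce the target $\delta_k^2+\delta_{k'}^2$ factor (this is the same device used in Eq.~(\ref{eq:divergence_2})). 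Unrolling the recursion from $X_0=0$ over the $E$ local steps collapses the homogeneous part and leaves the geometric-type sum $\sum_{t=0}^{E-1}(\eta_t^2(1+2\eta_t^2 L^2))^t$, with the numerical constants and the Lipschitz/gradient factors assembling into the stated prefactor $16LG^2$.

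I expect the main obstacle to be the careful bookkeeping of the squared-norm recursion: choosing the splitting so that the amplification factor comes out as $(1+2\eta_t^2 L^2)$ rather than the cruder $(1+\eta_t L)^2$, and keeping the cross terms and numerical constants aligned with the final $16LG^2$ coefficient. The label-mismatch step (routing through $p^{(c)}$) transfers directly from the proof of Theorem~\ref{lemma: bound_divergence}; the only genuinely new element is treating the two agents symmetrically so that both $\delta_k^2$ and $\delta_{k'}^2$ appear in the bound.
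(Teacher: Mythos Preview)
Your proposal is correct and follows essentially the same route as the paper's own proof: form the one-step recursion for $\lVert\mathbf{w}_t^k-\mathbf{w}_t^{k'}\rVert^2$, add and subtract a common-distribution gradient to split into a Lipschitz-controlled term and a label-mismatch term, obtain the amplification factor $(1+2\eta_t^2L^2)$ and inhomogeneous part $2L\eta_t^2G^2\big(\sum_i|p^{(k)}(y{=}i)-p^{(k')}(y{=}i)|\big)^2$, unroll from the common initialization $\mathbf{w}_0^k=\mathbf{w}_0^{k'}$, and then route the label gap through $p^{(c)}$ to extract $8(\delta_k^2+\delta_{k'}^2)$. The only cosmetic difference is which distribution you pivot on in the add--subtract step (you use $p^{(k)}$, the paper effectively uses $p^{(k)}$ as well despite a typo), which is immaterial by symmetry.
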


\begin{proof}
For any time-step $t+1$, we have
\begin{equation*}
    \begin{split}
        & \lVert \mathbf{w}^{k}_{t+1} - \mathbf{w}^{k'}_{t+1} \rVert^2 \\
        & = \lVert \mathbf{w}^{k}_{t} - \eta_t \sum_{i=1}^{I}p^{(k)}(y=i) \nabla_{\mathbf{w}} \mathbb{E}_{\bm{x}|y=i} [\ell(\mathbf{w}_{t}^{k}, \bm{x})]  - \mathbf{w}^{k'}_{t} + \eta_t \sum_{i=1}^{I}p^{(k')}(y=i) \nabla_{\mathbf{w}} \mathbb{E}_{\bm{x}|y=i} [\ell(\mathbf{w}_{t}^{k'}, \bm{x})] \rVert^2 \\ 
        & \leq \lVert \mathbf{w}^{k}_{t} - \mathbf{w}^{k'}_{t} \rVert^2 + \eta_t^2 \lVert \sum_{i=1}^{I}p^{(k)}(y=i) \nabla_{\mathbf{w}} \mathbb{E}_{\bm{x}|y=i} [\ell(\mathbf{w}_{t}^{k}, \bm{x})] - \sum_{i=1}^{I}p^{(k')}(y=i) \nabla_{\mathbf{w}} \mathbb{E}_{\bm{x}|y=i} [\ell(\mathbf{w}_{t}^{k'}, \bm{x})]  \rVert^2 \\
        & \leq \lVert \mathbf{w}^{k}_{t} - \mathbf{w}^{k'}_{t} \rVert^2 + 2\eta_t^2 \lVert \sum_{i=1}^{I}p^{(k')}(y=i)L_{\bm{x}|y=i} \bigg[\nabla_{\mathbf{w}} \mathbb{E}_{\bm{x}|y=i} [\ell(\mathbf{w}_{t}^{k}, \bm{x})] - \nabla_{\mathbf{w}} \mathbb{E}_{\bm{x}|y=i} [\ell(\mathbf{w}_{t}^{k'}, \bm{x})] \bigg ]\rVert^2 \\
        & \quad + 2\eta_t^2  \lVert \sum_{i=1}^{I}\bigg(p^{(k)}(y=i) - p^{(k')}(y=i)\bigg)\nabla_{\mathbf{w}} \mathbb{E}_{\bm{x}|y=i} [\ell(\mathbf{w}_{t}^{k'}, \bm{x})] \rVert^2 \\
        & \leq \lVert \mathbf{w}^{k}_{t} - \mathbf{w}^{k'}_{t} \rVert^2 + 2\eta_t^2 \bigg(\sum_{i=1}^{I}p^{(k)}(y=i)L_{\bm{x}|y=i}\bigg)^2\lVert \mathbf{w}^{k}_{t} - \mathbf{w}^{k'}_{t} \rVert^2  \\
        & \quad + 2 L \eta_t^2 g^2_{max}(\mathbf{w}^{k'}_{t}) \bigg( \sum_{i=1}^{I}|p^{(k)}(y=i) - p^{(k')}(y=i)|\bigg)^2 \\
        & \leq \bigg(1 + 2\eta_t^2 \bigg(\sum_{i=1}^{I}p^{(k)}(y=i)L_{\bm{x}|y=i}\bigg)^2\bigg) \lVert \mathbf{w}^{k}_{t} - \mathbf{w}^{k'}_{t} \rVert^2 \\
         & \quad + 2 L \eta_t^2 g^2_{max}(\mathbf{w}^{k'}_{t}) \bigg( \sum_{i=1}^{I}|p^{(k)}(y=i) - p^{(k')}(y=i)|\bigg)^2 \\
        & \leq (1 + 2\eta_t^2 L^2) \lVert \mathbf{w}^{k}_{t} - \mathbf{w}^{k'}_{t} \rVert^2  +  2 L \eta_t^2 G^2 \bigg( \sum_{i=1}^{I}|p^{(k)}(y=i) - p^{(k')}(y=i)|\bigg)^2.
    \end{split}
\end{equation*}
where the third inequality holds because we assume that $\nabla_{\mathbf{w}} \mathbb{E}_{\bm{x}|y=i} [\ell(\bm{x},\mathbf{w})]$ is $L_{\bm{x}|y=i}$-Lipschitz continuous, {\em i.e.}, $|| \nabla_{\mathbf{w}} \mathbb{E}_{\bm{x}|y=i} [\ell(\mathbf{w}_{t}^{k}, \bm{x})] -\nabla_{\mathbf{w}} \mathbb{E}_{\bm{x}|y=i} [\ell(\mathbf{w}_{t}^{k'}, \bm{x})] || \leq L_{\bm{x}|y=i} ||\mathbf{w}_{t}^{k}-\mathbf{w}_{t}^{k'}||$, and denote $g_{max}(\mathbf{w}_{t}^{k'}) = max_{k=1}^{I} ||\nabla_{\mathbf{w}} \mathbb{E}_{\bm{x}|y=i} [\ell(\mathbf{w}_{t}^{k'}, \bm{x})]|| $. The last inequality holds because the above-mentioned assumption that $L = L_{\bm{x}|y=i} = L_{\bm{x}|y=i'}, \forall i,i'$, {\em i.e.}, Lipschitz-continuity will not be affected by the samples' classes.
Then, $g_{max}(\mathbf{w}_{t}^{k'}) \leq G$ because of Assumption \ref{assumption:bounded_gradient}. 

When we focus on the aggregation within a single communication round $E$ analogous to the one-shot federated learning setting, we have
\begin{equation*}
    \begin{split}
        & \lVert \mathbf{w}^{k}_{E-1} - \mathbf{w}^{k'}_{E-1} \rVert^2 \\ 
        & \leq (1 + 2\eta_t^2 L^2) \lVert \mathbf{w}^{k}_{E-2} - \mathbf{w}^{k'}_{E-2} \rVert^2  +  2 L \eta_t^2 G^2 \bigg( \sum_{i=1}^{I}|p^{(k)}(y=i) - p^{(k')}(y=i)|\bigg)^2\\
        & \leq \prod_{t=0}^{E-1} (1+2\eta_{t}^2 L^2)^{t}  \lVert \mathbf{w}^{k}_{0} - \mathbf{w}^{k'}_{0} \rVert^2  +  2 L  G^2 \sum_{t=0}^{E-1}(\eta_{t}^{2}(1+2\eta_{t}^2 L^2))^{t}  \bigg( \sum_{i=1}^{I}|p^{(k)}(y=i) - p^{(k')}(y=i)|\bigg)^2 \\
        & \leq 2 L  G^2 \sum_{t=0}^{E-1}(\eta_{t}^{2}(1+2\eta_{t}^2 L^2))^{t}  \bigg( \sum_{i=1}^{I}|p^{(k)}(y=i) - p^{(k')}(y=i)|\bigg)^2.
    \end{split}
\end{equation*}
The last inequality holds because for any agent $k,k' \in [N], \mathbf{w}_{0} = \mathbf{w}^{k}_{0} = \mathbf{w}^{k'}_{0}$.
In order to obtain the non-iid degree $\delta_k$, similarly, we can rewrite the last term $\bigg( \sum_{i=1}^{I}|p^{(k)}(y=i) - p^{(k')}(y=i)|\bigg)^2$ as follows.

\begin{equation*}
    \begin{split}
        & \bigg( \sum_{i=1}^{I}|p^{(k)}(y=i) - p^{(k')}(y=i)|\bigg)^2  \\
        & \leq  \bigg( \sum_{i=1}^{I}|p^{(k)}(y=i) - p^{(c)}(y=k)|  + \sum_{i=1}^{I}|p^{(k')}(y=i) - p^{(c)}(y=k)|\bigg)^2 \\
        & \leq  2\bigg(\sum_{i=1}^{I}|p^{(k')}(y=i) - p^{(c)}(y=i)|\bigg)^2 + 2\bigg(\sum_{i=1}^{I}|p^{(k)}(y=i) - p^{(c)}(y=i)|\bigg)^2 \\
        & = 8 (\delta_k^2 + \delta_{k'}^2).
    \end{split}
\end{equation*}
where in the first inequality, we use the fact that $||x+y||^2 \leq 2||x||^2 + 2||y||^2$, and $p^{(c)}$ represents the actual reference data distribution in the centralized setting.
\end{proof}

\subsection{Proof of Lemma \ref{lemma:diff_w_i-w}}

\begin{lemma} 
\label{lemma:diff_w_i-w}
Suppose that Assumption \ref{assumption:lipschitz} and Assumption \ref{assumption:bounded_gradient} hold, then we have
\begin{equation*}
    \lVert \mathbf{w}^{k'} - \mathbf{w}^{*} \rVert^2 \leq  \prod_{t=0}^{E-1}(1-2\eta_t L)^{t} \frac{4G^2}{\mu^2} + G^2 \sum_{t=0}^{E-1}(1-2\eta_t L)^{t} \eta_t^2.
\end{equation*}
\end{lemma}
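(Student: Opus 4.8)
The plan is to derive a one-step recursion for the squared distance $a_t := \lVert \mathbf{w}_t^{k'} - \mathbf{w}^{*} \rVert^2$ along the local SGD trajectory of agent $k'$, and then unroll it across the $E$ local steps of the single communication round, starting from the shared initialization $\mathbf{w}_0$ (recall that $\mathbf{w}_0^{k'}=\mathbf{w}_0$ for all agents). Writing the local update as $\mathbf{w}_{t+1}^{k'} = \mathbf{w}_t^{k'} - \eta_t \nabla F_{k'}(\mathbf{w}_t^{k'})$ with $\nabla F_{k'}(\mathbf{w}) = \sum_{i=1}^{I} p^{(k')}(y=i)\,\nabla_{\mathbf{w}}\mathbb{E}_{\bm{x}|y=i}[\ell(\mathbf{w},\bm{x})]$, I would first expand the squared norm:
\[
a_{t+1} = a_t - 2\eta_t \langle \nabla F_{k'}(\mathbf{w}_t^{k'}),\, \mathbf{w}_t^{k'} - \mathbf{w}^{*} \rangle + \eta_t^2 \lVert \nabla F_{k'}(\mathbf{w}_t^{k'})\rVert^2 .
\]
The last term is immediately controlled by $\eta_t^2 G^2$ using the bounded-gradient Assumption \ref{assumption:bounded_gradient}. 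For the cross term I would invoke the convexity/smoothness of $F_{k'}$ (Assumptions \ref{assumption:strongly_convexity} and \ref{assumption:lipschitz}) to lower-bound $\langle \nabla F_{k'}(\mathbf{w}_t^{k'}), \mathbf{w}_t^{k'} - \mathbf{w}^{*}\rangle$ by a multiple of $a_t$, which produces the contraction factor $(1-2\eta_t L)$ and yields the clean per-step inequality $a_{t+1} \le (1-2\eta_t L)\,a_t + \eta_t^2 G^2$.

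Having established the recursion, the remaining work is bookkeeping. I would bound the initial distance $a_0 = \lVert \mathbf{w}_0 - \mathbf{w}^{*}\rVert^2$ by $\tfrac{4G^2}{\mu^2}$: by $\mu$-strong convexity (Assumption \ref{assumption:strongly_convexity}) one has $\mu \lVert \mathbf{w}_0 - \mathbf{w}^{*}\rVert \le \lVert \nabla F(\mathbf{w}_0) - \nabla F(\mathbf{w}^{*})\rVert = \lVert \nabla F(\mathbf{w}_0)\rVert$, and the bounded-gradient assumption caps the right-hand side, so that squaring (and absorbing constants) gives the $\tfrac{4G^2}{\mu^2}$ term. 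Unrolling $a_{t+1} \le (1-2\eta_t L)\,a_t + \eta_t^2 G^2$ from $t=0$ to $E-1$ then telescopes into a homogeneous part multiplying $a_0$ and an accumulated-noise part $\sum_{t=0}^{E-1}(\cdot)\,\eta_t^2 G^2$, which I would write in the contracted-product/sum form to match the statement, using the same exponent-$t$ bookkeeping convention already adopted in Lemma \ref{lemma:diff_w_i-w_j}.

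The hard part will be twofold. First, making the cross-term estimate deliver the advertised contraction factor $(1-2\eta_t L)$ rather than the more standard $(1-2\eta_t\mu)$; this requires care about which smoothness/convexity inequality is applied and about the fact that $\nabla F_{k'}(\mathbf{w}^{*})$ need not vanish for a local objective, so I would either center the argument at each agent's own minimizer or carry the residual gradient and absorb it into the $G^2$ terms. Second, reconciling the telescoped coefficients with the peculiar exponent-$t$ form $\prod_{t=0}^{E-1}(1-2\eta_t L)^{t}$ and $\sum_{t=0}^{E-1}(1-2\eta_t L)^{t}\eta_t^2$ in the claim; this is the same nonstandard contraction accounting used throughout the appendix, so I would follow that convention rather than the textbook telescoping in order to land exactly on the stated bound. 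Once these are handled, substituting the initial-distance bound and the noise sum into the unrolled recursion completes the proof.
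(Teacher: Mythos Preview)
Your proposal is essentially the same as the paper's proof: expand $\lVert \mathbf{w}_{t+1}^{k'}-\mathbf{w}^*\rVert^2$, bound the squared-gradient term by $\eta_t^2 G^2$ via Assumption~\ref{assumption:bounded_gradient}, use convexity/smoothness on the cross term to obtain the recursion $a_{t+1}\le(1-2\eta_t L)a_t+\eta_t^2 G^2$, unroll over $E$ local steps, and cap $a_0$ by $4G^2/\mu^2$ (the paper quotes this as Rakhlin's Lemma~2 rather than sketching it directly). Your caveats about the $(1-2\eta_t L)$ versus $(1-2\eta_t\mu)$ contraction and the exponent-$t$ bookkeeping are well-placed, as the paper handles both points in exactly the loose manner you anticipate.
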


\begin{proof}
 Following the classical idea, we have
\begin{equation*}
   \begin{aligned}
& \mathbb{E}\left[\left\|\mathbf{w}^{k'}_{t+1}-\mathbf{w}^*\right\|^2\right] \\
& =\mathbb{E}\left[\left\|\Pi_{\mathcal{W}}\left(\mathbf{w}^{k'}_t-\eta_t \hat{\mathbf{g}}_t\right)-\mathbf{w}^*\right\|^2\right] \\
& \leq \mathbb{E}\left[\left\|\mathbf{w}^{k'}_t-\eta_t \hat{\mathbf{g}}_t-\mathbf{w}^*\right\|^2\right] \\
& =\mathbb{E}\left[\left\|\mathbf{w}^{k'}_t-\mathbf{w}^*\right\|^2\right]-2 \eta_t \mathbb{E}\left[\left\langle\hat{\mathbf{g}}_t, \mathbf{w}^{k'}_t-\mathbf{w}^*\right\rangle\right]+\eta_t^2 \mathbb{E}\left[\left\|\hat{\mathbf{g}}_t\right\|^2\right] \\
& =\mathbb{E}\left[\left\|\mathbf{w}^{k'}_t-\mathbf{w}^*\right\|^2\right]-2 \eta_t \mathbb{E}\left[\left\langle\mathbf{g}_t, \mathbf{w}^{k'}_t-\mathbf{w}^*\right\rangle\right]+\eta_t^2 \mathbb{E}\left[\left\|\hat{\mathbf{g}}_t\right\|^2\right] \\
& \leq \mathbb{E}\left[\left\|\mathbf{w}^{k'}_t-\mathbf{w}^*\right\|^2\right]-2 \eta_t \mathbb{E}\left[F\left(\mathbf{w}^{k'}_t\right)-F\left(\mathbf{w}^*\right)+\frac{L}{2}\left\|\mathbf{w}^{k'}_t-\mathbf{w}^*\right\|^2\right]+\eta_t^2 G^2 \\
& \leq \mathbb{E}\left[\left\|\mathbf{w}^{k'}_t-\mathbf{w}^*\right\|^2\right]-2 \eta_t \mathbb{E}\left[\frac{L}{2}\left\|\mathbf{w}^{k'}_t-\mathbf{w}^*\right\|^2+\frac{L}{2}\left\|\mathbf{w}^{k'}_t-\mathbf{w}^*\right\|^2\right]+\eta_t^2 G^2 \\
& =\left(1-2 \eta_t L\right) \mathbb{E}\left[\left\|\mathbf{w}^{k'}_t-\mathbf{w}^*\right\|^2\right]+\eta_t^2 G^2.
\end{aligned}  
\end{equation*}
where $\Pi_{\mathcal{W}}()$ means the projection, and the second inequality holds due to L-smoothness. Recall that for a one-shot federated learning setting, we have $\mathbf{w}^{k'}_{0} =\mathbf{w}_{0}$. Then, for any $mE \leq t < (m+1)E$, unrolling the recursion, we have, 
\begin{equation*}
    \begin{aligned}
\mathbb{E}\left[\left\|\mathbf{w}^{k'}_{t}-\mathbf{w}^*\right\|^2\right]&
\leq \prod_{t=0}^{E-1}(1-2\eta_t L)^{t} \mathbb{E}\left[\left\|\mathbf{w}^{k'}_0-\mathbf{w}^*\right\|^2\right] + G^2 \sum_{t=0}^{E-1}(1-2\eta_t L)^{t} \eta_t^2 \\
&\leq  \prod_{t=0}^{E-1}(1-2\eta_t L)^{t} \frac{4G^2}{\mu^2} + G^2 \sum_{t=0}^{E-1}(1-2\eta_t L)^{t} \eta_t^2. \\
\end{aligned}
\end{equation*}
The last inequality holds because of Lemma \ref{w_0-w* bounded}.

\begin{lemma}
\label{w_0-w* bounded}
(Lemma 2 of Rakhlin \citep{rakhlin2011making}).
If Assumption  \ref{assumption:strongly_convexity} and Assumption \ref{assumption:lipschitz} hold, then 
\begin{equation*}
    \mathbb{E}[||\mathbf{w}_0-\mathbf{w}^*||^2] \leq \frac{4G^2}{\mu^2}.
\end{equation*}
\end{lemma}
\end{proof}

\subsection{More Examples}
\label{subsection:discussion_about_convergence_examples}
This subsection provides several additional examples demonstrating the broad applicability of our Theorem \ref{lemma: bound_divergence} to existing convergence results. To ensure better understanding, we adjust these examples' prerequisites and parameter definitions to fit in our setting, and lay out the key convergence findings from these studies \citep{yang2021achieving, karimireddy2020scaffold}. For more details, interested readers can refer to the corresponding references.

\begin{example}
\label{example:karimireddy2020scaffold}
(Lemma 7 in \citep{karimireddy2020scaffold}). Suppose the functions satisfy Assumptions \ref{assumption:strongly_convexity}-\ref{assumption:bounded_gradient} and $\frac{1}{N} \sum_{i=1}^N\left\|\nabla f_i(\mathbf{w})\right\|^2 \leq G^2+B^2\|\nabla f(\mathbf{w})\|^2$ (Bounding gradient similarity). For any step-size (local(global) learning rate $\eta_l(\eta_g)$) satisfying $\eta_l \leq \frac{1}{\left(1+B^2\right) 8 L E \eta_g}$ and effective step-size $\tilde{\eta}\triangleq E \eta_g \eta_l$, the updates of FedAvg satisfy
\begin{equation}
       \begin{aligned}
    \mathbb{E}\left\|\mathbf{w}^t-\mathbf{w}^{\star}\right\|^2 & \leq \left(1-\frac{\mu \tilde{\eta}}{2}\right) \mathbb{E}\left\|\mathbf{w}^{t-1}-\mathbf{w}^{\star}\right\|^2  + \left(1-\frac{S}{N}\right) 4 \tilde{\eta}^2 G^2-\tilde{\eta}\left(\mathbb{E}\left[f\left(\mathbf{w}^{t-1}\right)\right]-f\left(\mathbf{w}^{\star}\right)\right)\\
    &  \quad \quad +\left(\frac{1}{E S}\right) \tilde{\eta}^2 \sigma^2 +3 L \tilde{\eta} \mathcal{E}_t 
\end{aligned} 
\end{equation}

where $S$ ($N$) indicates the number of sampled (total) agents,  $E$ is the synchronization interval (local steps) and $\mathcal{E}_t$ is the drift caused by the local updates on the clients defined to be
$$
\mathcal{E}_t\triangleq\underbrace{\frac{1}{E N} \sum_{e=1}^E \sum_{i=1}^N \mathbb{E}_t\left[\left\|\mathbf{w}_{i, k}^t-\mathbf{w}^{t-1}\right\|^2\right]}_{\text{Divergence term}}
$$

Clearly, we can readily replace the original term $\mathcal{E}_t$ with Theorem \ref{lemma: bound_divergence}, and subsequently rewrite the convergence result (Theorem 1 of Karimireddy \citep{karimireddy2020scaffold}) to introduce the degree of non-iid.
\end{example}

Prior to presenting Example \ref{example:yang2021achieving}, we initially introduce an extra assumption they have made, which is analogous to Assumption \ref{assumption:global_variance}.
\begin{assumption}
\label{assumption:bound_local_global_variance}
    (Assumption 3 in \citep{yang2021achieving}) (Bounded Local and Global Variances) The variance of each local stochastic gradient estimator is bounded by $\mathbb{E}\left[\left\|\nabla f_i\left(\mathbf{w}, \xi^i\right)-\nabla f_i(\mathbf{w})\right\|^2\right] \leq \sigma_L^2$, and the global variability of local gradient is bounded by $\mathbb{E}\left[\left\|\nabla f_i(\mathbf{w})-\nabla f(\mathbf{w})\right\|^2\right] \leq \sigma_G^2, \forall i \in[N]$.
\end{assumption}
\begin{example}
\label{example:yang2021achieving}
    (Theorem 1 in  \citep{yang2021achieving}). Let local and global learning rates $\eta_L$ and $\eta$ satisfy $\eta_L \leq \frac{1}{8 L E}$ and $\eta \eta_L \leq \frac{1}{E L}$. Under Assumptions \ref{assumption:lipschitz}, \ref{assumption:bounded_gradient}, \ref{assumption:bound_local_global_variance}, and with full agent participation, the sequence of outputs $\left\{\mathbf{w}_k\right\}$ generated by its algorithm satisfies:
\begin{equation}
\label{eq:exmaple_yang2021achieving}
      \min _{t \in[T]} \mathbb{E}\left[\left\|\nabla f\left(\mathbf{w}_t\right)\right\|_2^2\right] \leq \frac{f\left(\mathbf{w}_0\right)-f\left(\mathbf{w}^*\right)}{c \eta \eta_L E T}+ \frac{1}{c}\left[\frac{L \eta \eta_L}{2N} \sigma_L^2+\frac{5 E \eta_L^2 L^2}{2}\left(\sigma_L^2+6 E\sigma_G^2\right)\right]
\end{equation}

where $E$ is the synchronization interval (local steps) and $c$ is a constant. Here, we can define the constant $\sigma_G^2 \triangleq 4G^2 \delta_k^2$ using Eq.(\ref{eq:bounded_global_variance}) to introduce the non-iid degree $\delta_k^2$. Note that the above result is obtained within a non-convex setting. One can change the right-hand of Eq. (\ref{eq:exmaple_yang2021achieving}) into $\mathbb{E}\left[f\left(\bar{\mathbf{w}}_T\right)-f\left(\mathbf{w}^*\right)\right]$ by using strong convexity condition $\|\nabla f(\mathbf{w})\|_2^2 \geq 2\mu\left(f(\mathbf{w})-f^*\right)$.

\end{example}

\begin{example} 
\label{example:yang2022anarchic}
    (Theorem 2 of \citep{yang2022anarchic})
    (AFA-CD with General agent Information Arrival Processes). Suppose that the resultant maximum delay under AFL is bounded, i.e., $\tau\triangleq\max _{t \in[T], i \in \mathcal{M}_t}\left\{\tau_{t, i}\right\}<\infty$. Suppose that the server-side and agent-side learning rates $\eta$ and $\eta_L$ are chosen as such that the following conditions are satisfied: $6 \eta_L^2\left(2 K_{t, i}^2-3 K_{t, i}+1\right) L^2 \leq 1,180 \eta_L^2 K_{t, i}^2 L^2 \tau<1, \forall t, i$ and $2 L \eta \eta_L+6 \tau^2 L^2 \eta^2 \eta_L^2 \leq 1$. Under Assumptions $1-3$, the output sequence $\left\{\mathbf{w}_t\right\}$ generated by AFA-CD with general agent information arrival processes satisfies:
$$
\frac{1}{T} \sum_{t=0}^{T-1} \mathbb{E}\left\|\nabla f\left(\mathbf{w}_t\right)\right\|^2 \leq \frac{4\left(f_0-f_*\right)}{\eta \eta_L T}+4\left(\alpha_L \sigma_L^2+\alpha_G \sigma_G^2\right)
$$
where the constants $\alpha_L$ and $\alpha_G$ are defined as:
$$
\begin{aligned}
\alpha_L= & \frac{L \eta \eta_L}{m} \frac{1}{T} \sum_{t=0}^{T-1} \frac{1}{K_t}+\frac{3 \tau^2 L^2 \eta^2 \eta_L^2}{m} \frac{1}{T} \sum_{t=0}^{T-1} \frac{1}{K_t}  +\frac{15 \eta_L^2 L^2}{2} \frac{1}{T} \sum_{t=0}^{T-1} \bar{K}_t, \\
\alpha_G= & \frac{3}{2}+45 L^2 \eta_L^2 \frac{1}{T} \sum_{t=0}^{T-1} \hat{K}_t^2 .
\end{aligned}
$$
Here
$$
\frac{1}{K_t}=\frac{1}{m} \sum_{i \in \mathcal{M}_t} \frac{1}{K_{t, i}}, \bar{K}_t=\frac{1}{m} \sum_{i \in \mathcal{M}_t} K_{t, i}, \hat{K}_t^2=\frac{1}{m} \sum_{i \in \mathcal{M}_t} K_{t, i}^2
$$

Obviously, we can simply substitute the original term $\sigma_G^2$ using Eq.(\ref{eq:bounded_global_variance}) to obtain our result. 
\end{example}

\subsection{Connections with Assumption of Bounding Global Gradient Variance}
\label{subsec:discussion_about_global_gradient_variance}

Note that there exists a common assumption in \citep{yang2021achieving, yang2022anarchic, wang2019adaptive, stich2018local} about bounding gradient dissimilarity or variance using constants.
To illustrate the broad applicability of the proposed non-iid degree metric, we further discuss how to rewrite this type of assumption by leveraging $\delta_k$, that is, re-measuring the gradient discrepancy among agents, {\em i.e.}, $\lVert\nabla F_k(\mathbf{w})- \nabla F(\mathbf{w})\lVert^2, \forall k \in [N]$. 
For completeness, we present it in the following Assumption \ref{assumption:global_variance}.
 
\begin{assumption}
\label{assumption:global_variance}
(Bounded Global Variance) The global variability of the local gradient of the loss function is bounded by  $\mathbb{E}[\lVert\nabla F_k(\mathbf{w})- \nabla F(\mathbf{w})\lVert^2]\leq \sigma^2, \forall k \in [N]$.
\end{assumption}

Then, we will rewrite the above assumption and show that this quantity also has a close connection with the Wasserstein distance $\delta_k$ using the following lemma.
\begin{remark} \label{remark:global_variance}
(New form of Assumption \ref{assumption:global_variance})
Let non-iid degree metric $\delta_k$ be defined in Eq. (\ref{eq:def_delta}). Then, the global variability of the local gradient of the loss function is bounded by  
\begin{equation}
\label{eq:bounded_global_variance}
\quad \lVert\nabla F_k(\mathbf{w})- \nabla F(\mathbf{w})\lVert^2 \leq 4G^2 \delta_k^2, \quad \forall k \in [N].
\end{equation}
\end{remark}

\begin{proof}
We have,
\begin{equation*}
\begin{aligned}
    \left\|\nabla F_k(\mathbf{w})-\nabla F(\mathbf{w})\right\| &= \left\| \sum_{\zeta \in \mathcal{D}} \nabla_\mathbf{w} \ell(\mathbf{w}, \zeta)\left(p^{(k)}(\zeta) - p^{(c)}(\zeta)\right)\right\| \\
    & \leq  \sum_{\zeta \in \mathcal{D}} \left\|\nabla_\mathbf{w} \ell(\mathbf{w}, \zeta)\right\|\left|p^{(k)}(\zeta) - p^{(c)}(\zeta)\right| \\
    & \leq G \sum_{i=1}^I\left|p^{(k)}(y=i) - p^{(c)}(y=i)\right|=2 G\delta_k,
\end{aligned}
\end{equation*}
where the first inequality holds due to Jensen inequality, and the second inequality holds because of the assumption that $\left\| \nabla_{\mathbf{w}}l(\mathbf{w}, z) \right\| \leq G_k \leq G$ for any $i$ and $w$, which analogous to Assumption \ref{assumption:bounded_gradient}. We also discretely split the training data according to their labels $i$, $\forall i \in [I]$. Here, we make a mild assumption that the aggregated gradients $\nabla F(\mathbf{w})$ of one communication round can be approximately regarded as the gradients resulting from a reference distribution $p^{(c)}$. Therefore, we can use the distribution $p^{(c)}$ as the corresponding distribution of $\nabla F(\mathbf{w})$.
\end{proof}

\subsection{More Discussions about Assumption \ref{assumption:payment_function-concave}}
\label{subsec:discussion_about_assumption_concave}
 Here, we first discuss the core requirement of the payment function. It is easily to think that this assumption holds if $\frac{\partial f}{\partial e_k} > 0$ and $ \frac{\partial^2 f}{\partial e_k^2} < 0$. 
The first-order partial derivative of the payment function is

\begin{equation*}
    \frac{\partial f}{\partial e_k}=f^{\prime}\left(\frac{Q}{\Phi \delta_k^2+\Phi \delta_{k^{\prime}}^2+\Upsilon}\right) \cdot \frac{-2 \Phi Q \delta_k \delta_{k}^{\prime} }{\left(\Phi \delta_k^2+\Phi \delta_{k^{\prime}}^2+\Upsilon\right)^2}.
\end{equation*}

For simplicity, we replace $\delta_k(e_k)$ with $\delta_k$.
Similarly, the second-order partial derivative of the payment function is,

\begin{equation*}
    \begin{aligned}
 \frac{\partial^2 f}{\partial e_k^2} 
 = & f^{\prime \prime}\left(\frac{Q}{\Phi \delta_{k}^2+\Phi \delta_{k^{\prime}}^2+\Upsilon}\right) \cdot \frac{4 \Phi^2 Q^2 \delta_k^2(\delta_k^{\prime})^2}{\left(\Phi \delta_k^2+\Phi \delta_{k^{\prime}}^2+\Upsilon\right)^4} \\
  & +f^{\prime}\left(\frac{Q}{\Phi \delta_k^2+\Phi \delta_{k^{\prime}}^2+\Upsilon}\right) \cdot \frac{-2\Phi Q ((\delta_k^{\prime})^2 + \delta_k \delta_k^{\prime\prime} )  (\Phi \delta_{k}^2+\Phi \delta_{k^{\prime}}^2+\Upsilon) + 4 \Phi^2 Q \delta_k^2(\delta_k^{\prime})^2}{\left(\Phi \delta_{k}^2+\Phi \delta_{k^{\prime}}^2+\Upsilon\right)^3}\\
 = & f^{\prime \prime}\left(\frac{Q}{\Phi \delta_{k}^2+\Phi \delta_{k^{\prime}}^2+\Upsilon}\right) \cdot \frac{4 \Phi^2 Q^2 \delta_k^2(\delta_k^{\prime})^2}{\left(\Phi \delta_k^2+\Phi \delta_{k^{\prime}}^2+\Upsilon\right)^4} \\
  & +f^{\prime}\left(\frac{Q}{\Phi \delta_k^2+\Phi \delta_{k^{\prime}}^2+\Upsilon}\right) \cdot \frac{-2 Q\Phi [(\Phi \delta_{k^{\prime}}^2+\Upsilon -\Phi \delta_k^2)(\delta_k^{\prime})^2 + (\Phi \delta_{k^{\prime}}^2+\Upsilon + \Phi \delta_k^2)\delta_k \delta_k^{\prime\prime}]}{\left(\Phi \delta_{k}^2+\Phi \delta_{k^{\prime}}^2+\Upsilon\right)^3}.
    \end{aligned}
\end{equation*}

Here, we discuss two payment function settings in terms of linear function and logarithmic function. In particular, note that parameters $\Phi$ and $\Upsilon$ are both greater than 0, that is, $\Phi >0$ and $\Upsilon >0$. More discussions about these two parameters are presented in Section \ref{sec:experiment}.

\textbf{Linear function.} If the payment function is a linear non-decreasing function, then $f^{\prime}(\cdot) $ is a constant $M>0$, and $f^{\prime\prime}(\cdot) = 0$. Thus, we can simplify the above constraints as follows.
\begin{equation*}
\label{concave_requirement}
\begin{split}
        & \frac{\partial f}{\partial e_k} = M \cdot \frac{-2 \Phi Q \delta_k \delta_{k}^{\prime} }{\left(\Phi \delta_k^2+\Phi \delta_{k^{\prime}}^2+\Upsilon\right)^2} > 0, \\
        & \frac{\partial^2 f}{\partial e_k^2} = 0 + M \cdot \frac{-2 Q\Phi [(\Phi \delta_{k^{\prime}}^2+\Upsilon - \Phi \delta_k^2)(\delta_k^{\prime})^2 + (\Phi \delta_{k^{\prime}}^2+\Upsilon + \Phi \delta_k^2)\delta_k \delta_k^{\prime\prime}]}{\left(\Phi \delta_{k}^2+\Phi \delta_{k^{\prime}}^2+\Upsilon\right)^3} <0.
\end{split}
\end{equation*}

Note that $\delta_k^{\prime} \triangleq \frac{d \delta_k(e_k)}{d e_k} <0$ always holds due to the fact that more effort, less data heterogeneous. With this in mind, there is no need to ask for any extra requirement to satisfy $\frac{\partial f}{\partial e_k} > 0$. Therefore, for a linear non-decreasing payment function, if the inequality $(\Phi \delta_{k^{\prime}}^2+\Upsilon - \Phi\delta_k^2)(\delta_k^{\prime})^2 + (\Phi \delta_{k^{\prime}}^2+\Upsilon +\Phi \delta_k^2)\delta_k \delta_k^{\prime\prime} > 0$ holds, then Assumption \ref{assumption:payment_function-concave} holds.

\textbf{Logarithmic function.} Now we consider the case with a logarithmic function. The corresponding inequality can be expressed as follows.
\begin{equation*}
    \begin{split}
         \frac{\partial f}{\partial e_k} & = \frac{\Phi \delta_k^2+\Phi \delta_{k^{\prime}}^2+\Upsilon}{Q} \cdot \frac{-2 \Phi Q \delta_k \delta_{k}^{\prime} }{\left(\Phi \delta_k^2+\Phi \delta_{k^{\prime}}^2+\Upsilon\right)^2} = \frac{-2 \Phi \delta_k \delta_{k}^{\prime} }{\Phi \delta_k^2+\Phi \delta_{k^{\prime}}^2+\Upsilon} > 0, \\
         \frac{\partial^2 f}{\partial e_k^2} & = \frac{\left(\Phi \delta_k^2+\Phi \delta_{k^{\prime}}^2+\Upsilon\right)^2}{Q^2} \cdot \frac{4 \Phi^2 Q^2 \delta_k^2(\delta_k^{\prime})^2}{\left(\Phi \delta_k^2+\Phi \delta_{k^{\prime}}^2+\Upsilon\right)^4}  \\
         & \quad +  \frac{\Phi \delta_k^2+\Phi \delta_{k^{\prime}}^2+\Upsilon}{Q}  \cdot \frac{-2 Q\Phi [(\Phi \delta_{k^{\prime}}^2+\Upsilon - \Phi \delta_k^2)(\delta_k^{\prime})^2 + (\Phi \delta_{k^{\prime}}^2+\Upsilon + \Phi \delta_k^2)\delta_k \delta_k^{\prime\prime}]}{\left(\Phi \delta_{k}^2+\Phi \delta_{k^{\prime}}^2+\Upsilon\right)^3} \\
          & =  \frac{4 \Phi^2 \delta_k^2(\delta_k^{\prime})^2}{\left(\Phi \delta_k^2+\Phi \delta_{k^{\prime}}^2+\Upsilon\right)^2} +  \frac{-2 \Phi [(\Phi \delta_{k^{\prime}}^2+\Upsilon - \Phi \delta_k^2)(\delta_k^{\prime})^2 + (\Phi \delta_{k^{\prime}}^2+\Upsilon + \Phi\delta_k^2)\delta_k \delta_k^{\prime\prime}]}{\left(\Phi \delta_{k}^2+\Phi \delta_{k^{\prime}}^2+\Upsilon\right)^2}\\
        & = 2\Phi \cdot \frac{ (3\Phi \delta_k^2-\Phi \delta_{k^{\prime}}^2-\Upsilon )(\delta_k^{\prime})^2 - (\Phi \delta_{k^{\prime}}^2+\Upsilon + \Phi\delta_k^2)\delta_k \delta_k^{\prime\prime}}{\left(\Phi \delta_{k}^2+\Phi \delta_{k^{\prime}}^2+\Upsilon\right)^2} < 0.
\end{split}
\end{equation*}

Similarly, Assumption \ref{assumption:payment_function-concave} holds if $(3\Phi  \delta_k^2-\Phi \delta_{k^{\prime}}^2-\Upsilon )(\delta_k^{\prime})^2 - (\Phi \delta_{k^{\prime}}^2+\Upsilon + \Phi \delta_k^2)\delta_k \delta_k^{\prime\prime}<0$.

\subsection{Proof of Theorem \ref{theorem: optimal_effort_level}}

\textbf{Theorem \ref{theorem: optimal_effort_level}.}
(Optimal effort level). Consider agent $k$ with its marginal cost and the payment function inversely proportional to the generalization loss gap with any randomly selected peer $k'$. Then, agent $k$'s optimal effort level  $e_k^*$ is:
\begin{equation*}%
    e_k^* = 
    \begin{cases}
    0,\quad & \text{if} \quad \max_{e_k \in [0,1]} u_k(e_k, e_{k'})\leq 0; \\
    \hat{e}_k \quad \textit{where} \quad \partial f(\hat{e}_k, e_{k'}) /\partial \delta_k(\hat{e}_k) + c  d'(\delta_k(\hat{e}_k)) =0,\quad & \text{otherwise}.
    \end{cases} 
\end{equation*}

\begin{proof}
Given the definition of utility functions, we have:
\begin{equation}
\label{eq:gradient_utility}
    \partial u_k(e_k) / \partial e_k = \partial f(e_k, e_{k'})/\partial e_k + c d'(\delta_k(\hat{e}_k)).  
\end{equation}

It is easy to see that if $u_k(e_k, e_{k'})\leq 0$, then agent $k$ will have no motivation to invest efforts, {\em i.e.}, $e^*_k=0$. In this regard, it can be categorized into two cases according to the value of the cost coefficient $c$.

\textbf{Case 1 (high-cost agent):}
If the marginal cost $c$ is too high, the utility of agents could still be less than 0, that is, $\max_{e_k \in [0,1]} u_k(e_k, e_{k'})\leq 0$. In this case, agent $k$ will make no effort, {\em i.e.}, $e_k^*=0$.

\textbf{Case 2 (low-cost agent):} Otherwise, there exists a set of effort levels $\hat{e}_k$, which can achieve non-negative utility. 

If Assumption \ref{assumption:payment_function-concave} holds and there exists a set of effort levels $\hat{e}_k$ with non-negative utility, we can easily derive the optimal effort level by using Eq. (\ref{eq:gradient_utility}). And the optimal effort level that maximizes the utility can be obtained when $\partial f(\hat{e}_k, e_{k'}) /\partial\hat{e}_k + c d'(\delta_k(\hat{e}_k)) =0$.

Therefore, we finished the proof of Theorem \ref{theorem: optimal_effort_level}.
\end{proof}

\subsection{Proof of Theorem \ref{theorem:Existence_of_pure_Nash_equilibrium}}
\label{appendix:proof_of_theorem_nash_equilibrium}

Before diving into the proof of Theorem \ref{theorem:Existence_of_pure_Nash_equilibrium}, we first address the question of whether the utility functions exhibit well-behaved characteristics, and if not, what conditions must be met to ensure such a property.
\begin{lemma}
\label{lemma:well-behaved}
If Assumption \ref{assumption:payment_function-concave} holds, the utility function $u_k(\cdot)$ is a well-behaved function, which is continuously differentiable and strictly-concave on $\bm{e}$.
\end{lemma}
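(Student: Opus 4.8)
The plan is to exploit the additive structure $u_k(e_k, e_{k'}) = \mathrm{Payment}_k(e_k, e_{k'}) - \mathrm{Cost}_k(e_k)$, in which the payment depends on the agent's own effort and the (randomly selected) peer's effort while the cost depends on $e_k$ alone, and to dispatch the three required properties—$C^1$ regularity, strict concavity, and well-behavedness in the sense of Definition \ref{definition:well_behaved_utility_function}—one at a time, reducing each to Assumption \ref{assumption:payment_function-concave} together with elementary monotonicity facts about the effort--distance map $\delta_k(\cdot)$.

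First, for continuous differentiability I would apply the chain rule to the composition defining the payment: $f$ is differentiable by Assumption \ref{assumption:payment_function-concave}, the inner map $e_k \mapsto \delta_k^2(e_k)$ is differentiable (the effort--distance function is smooth and non-increasing, e.g. $\delta_k(e_k)=\exp(-e_k)$ in the experiments), and the denominator $\Phi \delta_k^2 + \Phi \delta_{k'}^2 + \Upsilon$ never vanishes since $\Upsilon > 0$. Hence the argument of $f$ is $C^1$ and so is $\mathrm{Payment}_k$. For the cost term, because $\delta_k(\cdot)$ is non-increasing we have $\delta_k(e_k)\le \delta_k(0)$, so the absolute value collapses to $\delta_k(0)-\delta_k(e_k)$, removing the only candidate for non-smoothness; with $d(\cdot)$ differentiable, $\mathrm{Cost}_k$ is $C^1$, and a difference of $C^1$ functions is $C^1$.

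Next, strict concavity in the own effort follows almost directly: Assumption \ref{assumption:payment_function-concave} posits that the payment, viewed as a function of effort on $[0,1]$, is strictly concave, while the cost is taken linear (the classical marginal-cost model), hence convex, so $-\mathrm{Cost}_k$ is concave and the sum $u_k$ is strictly concave. Equivalently, one differentiates twice and invokes the second-order computation already carried out in Appendix \ref{subsec:discussion_about_assumption_concave}, which yields $\partial^2 \mathrm{Payment}_k / \partial e_k^2 < 0$ under Assumption \ref{assumption:payment_function-concave}; adding the non-positive second derivative of $-\mathrm{Cost}_k$ preserves the strict inequality. For the well-behaved property I would then verify the two derivative conditions of Definition \ref{definition:well_behaved_utility_function}. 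Since $\mathrm{Cost}_k$ does not depend on $e_{k'}$, the cross-partial equals $\partial \mathrm{Payment}_k/\partial e_{k'}$; raising $e_{k'}$ lowers $\delta_{k'}(e_{k'})$, hence lowers the denominator, raises the argument of $f$, and—because $f$ is non-decreasing—raises the payment, giving $\partial u_k/\partial e_{k'}\ge 0$. Continuity of this partial on the compact box $\prod_{k\in[N]}[0,1]$ supplies a finite supremum $d_{k'}^1$, so $0\le \partial u_k/\partial e_{k'}\le d_{k'}^1$; the extension from a single peer to the mean of all peers is handled by Remark \ref{remark:consistency}.

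The step I expect to be the main obstacle is the remaining condition $\partial u_k/\partial e_k \ge d_k^2$ with $d_k^2\ge 0$. Continuity of the own-partial on the compact domain immediately gives \emph{some} lower bound, but its non-negativity is genuinely delicate, since a strictly concave utility with an interior optimum has a negative own-partial beyond that optimum. I would resolve this by restricting attention to the operating region—effort levels up to the optimal $\hat e_k$ characterized in Theorem \ref{theorem: optimal_effort_level}—on which the marginal payment $\partial \mathrm{Payment}_k/\partial e_k$ (strictly positive because $f'>0$ and $\delta_k'<0$) dominates the marginal cost, so that $\partial u_k/\partial e_k\ge 0$ there; the learner's choice of the coefficient $Q$ can be used to guarantee this domination. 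Assumption \ref{assumption:payment_function-concave} is precisely what secures the strict-concavity input, so the technical burden of the lemma concentrates in this sign/lower-bound bookkeeping for the own-effort derivative rather than in the regularity or cross-partial arguments.
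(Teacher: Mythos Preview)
Your approach is broadly correct and parallels the paper's, but the paper exploits one structural shortcut you do not use: because the payment depends on $(e_k,e_{k'})$ only through the symmetric combination $\Phi\delta_k^2(e_k)+\Phi\delta_{k'}^2(e_{k'})+\Upsilon$, the partial $\partial f/\partial e_{k'}$ has \emph{exactly the same functional form} as $\partial f/\partial e_k$. The paper then invokes Assumption~\ref{assumption:payment_function-concave} once to conclude that this common derivative is monotone in its argument, so its extrema on $[0,1]$ are attained at the endpoints; it simply \emph{defines} $d_k^2$ and $d_{k'}^1$ as the values of $\partial u_k/\partial e_k$ at $e_k=1$ and $e_k=0$ respectively, and refers back to the explicit first- and second-derivative computations in Appendix~\ref{subsec:discussion_about_assumption_concave} for differentiability and strict concavity. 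Your route---monotonicity for the sign of the cross-partial, compactness for the upper bound, and a separate argument for the own-partial---reaches the same place with more moving parts but without needing to spot the symmetry.

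Your identification of the $d_k^2\ge 0$ issue is well taken: a strictly concave utility with an interior maximizer indeed has negative own-partial past the optimum, and the paper's proof does not address this either---it defines $d_k^2$ as the endpoint value but never verifies non-negativity. Your proposed fix (restrict to the pre-optimal regime where marginal payment dominates marginal cost, with $Q$ chosen accordingly) is a reasonable patch; the paper effectively leaves this point implicit.
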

\begin{proof}
Here, we certify that $u_k(e_k)$ is a  well-behaved function. According to Assumption \ref{assumption:payment_function-concave}, we can get the maximum and minimum value of $\partial f(e_k, e_{k'})/\partial e_k$ when $e_k = 0$ and $e_k =1$, respectively. Note that  $ \partial f(e_k, e_{k'})/\partial e_{k^{\prime}}$ is the same as $\partial f(e_k, e_{k'})/\partial e_k$.
Therefore,
\begin{equation*}
\begin{split}
    & \partial u_k(\bm{e}) /\partial e_k  = \partial f(e_k, e_{k'})/\partial e_k + c d'(\delta_k)   \geq \left[ \partial f(e_k, e_{k'})/\partial e_k + c d'(\delta_k) \right]_{e_k =1} \triangleq d_k^2\\
    &\partial u_k(\bm{e}) /\partial e_{k^\prime} = \partial f(e_k, e_{k'})/\partial e_{k^{\prime}} \leq  \left[ \partial f(e_k, e_{k'})/\partial e_k + c d'(\delta_k) \right]_{e_k =0} \triangleq d_{k^{\prime}}^1
\end{split}
\end{equation*}

Note that the derivative of the utility function $u_k(\bm{e})$ with respect to $e_k$ has an identical form to the derivative of $u_k(\bm{e})$ with respect to $e_{k^\prime}$. Given this, and in light of the discussion presented in Appendix \ref{subsec:discussion_about_assumption_concave}, it becomes clear that the utility function $u_k(\cdot)$ is continuously differentiable with respect to $\bm{e}$.
Integrating Lemma \ref{lemma:well-behaved} and the discussion about Assumption \ref{assumption:payment_function-concave}, it becomes clear that the utility function $u_k(\cdot)$ is not only continuously differentiable but also strictly concave with respect to $\bm{e}$. 
Thus, we can finish this part of the proof.
\end{proof}

We emphasize that it is evident that our utility functions are naturally well-behaved, supporting the existence of equilibrium. To achieve this, we resort to Brouwer's fixed-point theorem \citep{brouwer1911abbildung}, %
which yields the existence of the required fixed point of the best response function.

\textbf{Theorem \ref{theorem:Existence_of_pure_Nash_equilibrium}.}
(Existence of pure Nash equilibrium). Denote by $\bm{e}^*$ the optimal effort level, if the utility functions $u_k(\cdot)$s are well-behaved over the set $\prod_{k\in[N]}[0,1]$, then there exists a pure Nash equilibrium in effort level $\bm{e}^*$ which for any agent $k$ satisfies,
\begin{equation*}
u_k(e^*_k, \bm{e}^*_{-k}) \geq u_k(e_k, \bm{e}^*_{-k}), \quad \text{for all } e_k \in [0,1], \forall k\in [N].
\end{equation*}

\begin{proof}
For a set of effort levels $\bm{e}$, define the best response function $\bm{R}(\bm{e}) \triangleq \{R_k(\bm{e})\}_{\forall k \in [N]}$, where
\begin{equation}
\label{eq:best_response_function}
    R_k(\bm{e}) \triangleq \arg\max_{\tilde{e}_k \in [0,1]} \{u_k(\tilde{e}_k, \bm{e}_{-k})\triangleq f(\tilde{e}_k, \bm{e}_{-k}) - c |\delta_k(0)- \delta_k(e_k)| \geq 0\},
\end{equation}
where recall that $f(\tilde{e}_k, \bm{e}_{-k})$ is the payment function inversely proportional to the performance gap between agent $k$ and a randomly selected agent (peer) $k'$ with their invested data distributions $\mathcal{D}_k(e_k)$, $\mathcal{D}_k'(e_{k'})$, respectively. 
Notice that, by definition,  the payment function is merely relevant to a randomly selected agent (peer) $k'$, not for all agents. Here, without loss of generality, we extend the range from a single agent to all agents, that is, rewrite the payment function in the form of $f(e_k, \bm{e}_{-k})$.

If there exists a fixed point to the mapping function $R(\cdot)$, {\em i.e.}, there existed $\bm{\tilde{e}}$ such that $\bm{\tilde{e}} \in R(\bm{\tilde{e}})$. Then, we can say that $\bm{\tilde{e}}$ is the required equilibrium effort level by the definition of the mapping function. Therefore, the only thing we need to do is to prove the existence of the required fixed points. We defer this part of the proof in Lemma \ref{theorem:continuous_fixed-point} by applying Brouwer's fixed-point theorem.

\begin{lemma}
\label{theorem:continuous_fixed-point}
The best response function $R(\cdot)$ has a fixed point, {\em i.e.},  $\exists \bm{e}^* \in \prod_{k=1}^{N} [0,1]$, such that $\bm{e}^*= R(\bm{e}^*)$, if the utility functions over agents are well-behaved.
\end{lemma}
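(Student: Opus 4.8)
The plan is to exhibit $\bm{e}^*$ as a fixed point of the best-response map $\bm{R}(\bm{e}) = \{R_k(\bm{e})\}_{k\in[N]}$ defined in Eq.~(\ref{eq:best_response_function}), and to invoke Brouwer's fixed-point theorem. First I would record that the joint effort space $\prod_{k\in[N]}[0,1]$ is nonempty, convex, and compact, being a finite product of closed unit intervals in $\mathbb{R}^N$; this supplies the domain-side hypotheses of Brouwer. The conclusion of the theorem then follows by interpreting the fixed-point condition as the equilibrium inequality, so the entire argument reduces to showing that $\bm{R}(\cdot)$ is a continuous self-map, which is isolated as Lemma~\ref{theorem:continuous_fixed-point}.

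The central step is to verify that $\bm{R}(\cdot)$ is well-defined, single-valued, and continuous. Single-valuedness is where Lemma~\ref{lemma:well-behaved} enters: because each $u_k(\cdot, \bm{e}_{-k})$ is continuous and strictly concave in its own coordinate $e_k$ on the compact interval $[0,1]$, the maximizer $R_k(\bm{e})$ exists and is unique, so $R_k$ is a genuine function rather than a set-valued correspondence. For continuity of $R_k$ in the full profile $\bm{e}$, I would appeal to Berge's maximum theorem: $u_k$ is jointly continuous in $\bm{e}$ and the constraint set $[0,1]$ is fixed (hence a trivially continuous correspondence), so the argmax correspondence is upper hemicontinuous, and uniqueness upgrades this to continuity of the function $R_k$. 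Assembling coordinates, $\bm{R}$ is a continuous map from $\prod_{k\in[N]}[0,1]$ into itself.

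With a continuous self-map of a nonempty convex compact set in hand, Brouwer's fixed-point theorem yields some $\bm{e}^*$ with $\bm{e}^* = \bm{R}(\bm{e}^*)$. To finish, I would unwind the definition of the fixed point: $e_k^* = R_k(\bm{e}^*)$ means that $e_k^*$ maximizes $u_k(\cdot, \bm{e}^*_{-k})$ over $[0,1]$, which is precisely the stated equilibrium inequality $u_k(e_k^*, \bm{e}^*_{-k}) \geq u_k(e_k, \bm{e}^*_{-k})$ for all $e_k \in [0,1]$ and all $k\in[N]$.

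The main obstacle I anticipate is the continuity of the best-response map, not the fixed-point step itself. It hinges on the strict concavity from Lemma~\ref{lemma:well-behaved} (which in turn requires Assumption~\ref{assumption:payment_function-concave}) to guarantee a unique maximizer; without uniqueness Brouwer does not apply directly, and one would instead be forced to Kakutani's theorem for the resulting correspondence. The well-behavedness bounds $0 \le \partial u_k(\bm{e})/\partial e_{k'} \le d_{k'}^1$ and $\partial u_k(\bm{e})/\partial e_k \ge d_k^2$ are what supply the regularity, namely bounded and sign-controlled cross-derivatives, that prevents $R_k$ from jumping as the opponents' profile $\bm{e}_{-k}$ varies, and this is exactly what makes the maximum-theorem argument go through cleanly.
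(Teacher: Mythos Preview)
Your proposal is correct and follows the same overall architecture as the paper: verify that $\prod_{k\in[N]}[0,1]$ is nonempty, convex, and compact; use the strict concavity supplied by Lemma~\ref{lemma:well-behaved} to make each $R_k$ single-valued; establish continuity of $\bm{R}$; then apply Brouwer. The one substantive difference is in the continuity step. You invoke Berge's maximum theorem, which is the standard game-theoretic device and handles boundary maximizers ($e_k^*\in\{0,1\}$) automatically. The paper instead appeals to the Implicit Function Theorem applied to the first-order condition $\partial u_k/\partial e_k = 0$, which tacitly assumes the maximizer is interior and that $\partial^2 u_k/\partial e_k^2 \neq 0$; this buys differentiability of $R_k$ (not just continuity), but that extra strength is unused, and the boundary case is left implicit. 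Your Berge route is the cleaner of the two. One minor point: your closing remark that the well-behavedness bounds $d_{k'}^1, d_k^2$ from Definition~\ref{definition:well_behaved_utility_function} are what ``prevent $R_k$ from jumping'' overstates their role in your own argument---Berge needs only joint continuity of $u_k$ and strict concavity in $e_k$, and those specific derivative bounds are never invoked.
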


\begin{proof}
First, we introduce the well-known Brouwer's fixed point theorem.
\begin{lemma}
\label{Brouwer}
(Brouwer's fixed-point theorem \citep{brouwer1911abbildung}) Any continuous function on a compact and convex subset of $ R: \prod_{k\in[N]}[0,1] \to \prod_{k\in[N]}[0,1]$ has a fixed point.
\end{lemma}
In our case, Lemma \ref{Brouwer} requires that the best response function $R$ on a compact and convex subset be continuous. Notice that  the finite product of
compact, convex, and non-empty sets $\prod_{k\in[N]}[0,1]$ is also  compact, convex, and  non-empty. Therefore, $R$ is a well-defined map from $\prod_{k\in[N]}[0,1]$ to $\prod_{k\in[N]}[0,1]$, which is a  subset of $\mathbb{R}^N$ that is also convex and compact.  
Therefore, there is only one thing left to prove: $R$ is a continuous function over $\prod_{k\in[N]}[0,1]$.

More formally, we show that for any $\bm{\tau}  \in \mathbb{R}^{N}$ with $\lVert \bm{\tau} \rVert_1 \leq 1$, $\lim_{\varepsilon \to 0}|R_k(\bm{e} + \varepsilon \bm{\tau}) - R_k(\bm{e})| = 0$. For simplicity, we define $\bm{e}' = \bm{e} + \varepsilon \bm{\tau}$, $x = R_k(\bm{e})$, and $x' = R_k(\bm{e}')$. 
Integrating Lemma \ref{lemma:well-behaved} and the discussion about Assumption \ref{assumption:payment_function-concave}, it becomes clear that the utility function $u_k(\cdot)$ is not only continuously differentiable but also strictly concave with respect to $\bm{e}$.
Therefore, there exists a unique solution $x= R_k(\bm{e})$ to the first-order condition $\partial u_k(e_k, \bm{e}_{-k}) / \partial e_k =0$ for each $\bm{e}$. 
By the Implicit Function theorem, if $\partial^2 u_k(e_k, \bm{e}_{-k}) / \partial e_k^2 \neq 0$, then the solution to the equation $\partial u_k(e_k, \bm{e}_{-k}) / \partial e_k =0$ is a function of $\bm{e}$ that is continuous. 
Therefore, it is obvious that $\lim_{\varepsilon \to 0}|R_k(\bm{e} + \varepsilon \bm{\tau}) - R_k(\bm{e})| = 0$ if $|\bm{e}' - \bm{e}| \leq \varepsilon \lVert \bm{\tau} \rVert_1 $ and  $\lVert \bm{\tau} \rVert_1 \leq 1$. 
Therefore, $R$ is continuous over the set $\prod_{k\in[N]}[0,1]$.
Overall, the remaining proof follows according to Brouwer's fixed point theorem.
\end{proof}
Therefore, as we previously stated, this fixed point $e^*$ of the best response function $R(\cdot)$ also serves as the equilibria of our framework.
\end{proof}

\begin{remark} (Existence of other possible equilibriums)
Note that no one participating is a possible equilibrium in our setting when the marginal cost is too high. In this case, all agents' optimal effort levels will be 0. Another typical equilibrium in federated learning, known as free-riding, does not exist in our setting. 
\end{remark}
\begin{proof}
        Here, we discuss two typical equilibriums in terms of no one participating and free-riding.
Note that no one participating is a possible equilibrium in our setting when the marginal cost is too high. In this case, all agents' optimal effort levels will be 0.
Then, we further provide a brief discussion of the free-riding problem. Given Theorem \ref{theorem:Existence_of_pure_Nash_equilibrium}, there is always a Nash equilibrium in effort level that no agent can improve their utility by unilaterally changing their contribution. If all players are rational (and such an equilibrium is unique), then such a point is a natural attractor with all the agents gravitating towards such contributions. Therefore, we can use this property to prove that free-riding will not happen in our setting. To avoid free-riding, a reasonable goal for a mechanism designer is to maximize the payoff of the learner when all players are contributing such equilibrium amounts \citep{karimireddy2022mechanisms}. Employing a two-stage Stackelberg game is inherently apt for circumventing the free-riding problem in our paper. Unlike the traditional free-riding scenario where all agents are rewarded directly based on the model's accuracy (represented by the convergence bound in our case, which is the underlying cause of free-riding), our game implements a scoring function. This function rewards agents by using a coefficient $Q$
 that depends on the convergence bound. Consequently, the newly introduced coefficient $Q$
 decouples the direct connection between the model's performance and the agents' rewards. The learner's optimal response is then determined by solving the Problem 2. This ensures the goal of maximizing the learner's payoff and the uniqueness of the equilibrium, thereby preventing free-riding.
\end{proof}

\begin{remark}
   (The consistency of extending to all agents) Note that the introduced randomness can serve as an effective tool to circumvent the coordinated strategic behaviors of most agents. Then, the consistency of extending to all agents is still satisfied.
\end{remark}
\begin{proof}
     Note that inspired by peer prediction, randomness serves as an effective tool to circumvent the coordinated strategic behaviors of most agents. There is a consistency between randomly selecting one peer agent and the extension that uses the mean of the models. We can verify this consistency by using the mean of models $\overline{\mathbf{w}}$ returned by all agents to substituting $\mathbf{w}$. Here, we present a high-level sketch of our idea, utilizing inequalities similar to those found in the proof of Theorem \ref{thm:performance_gap_test}.
\begin{equation*}
\begin{aligned}
       F_c(\mathbf{w}^k)- F_c(\overline{\mathbf{w}}) &\leq \langle \nabla F_c(\overline{\mathbf{w}}), \mathbf{w}^{k} - \overline{\mathbf{w}} \rangle  + \frac{L}{2} \lVert \mathbf{w}^{k} - \overline{\mathbf{w}} \rVert^2  \\
       & \leq L \lVert \mathbf{w}^{k} - \overline{\mathbf{w}} \rVert^2 + \frac{L}{2}  \lVert \overline{\mathbf{w}} - \mathbf{w}^{*} \rVert^2 \\
       & \leq \frac{L}{N}\sum_{k'=1}^{N}  \bigg(\lVert \mathbf{w}^{k} - \mathbf{w}^{k'}\rVert^2 + \frac{1}{2}\lVert \mathbf{w}^{k'} - \mathbf{w}^{*} \rVert^2 \bigg).
\end{aligned}
\end{equation*}

Then, by integrating the insights from Lemma \ref{lemma:diff_w_i-w_j} and Lemma  \ref{lemma:diff_w_i-w}, we can get a similar result of Theorem \ref{thm:performance_gap_test}. Therefore, the consistency holds.
\end{proof}

\section{Experiment Details and Additional Results}
\label{sec:experiments_details_and_additional_results}

\subsection{Parameter settings in evaluation}
\label{appendix:models_and_parameters}

Here, we provide a detailed overview for these four image classification tasks, including dataset information,  parameter settings and model architectures.

 \textbf{MNIST \citep{lecun1998gradient}:} a ten-class  image classification task. We use a  hand-written digits dataset with \(60\mathrm{K}\) grayscale images of size \(28 \times 28\). It includes 50,000 training images and 10,000 testing images, distributed evenly across 10 classes. We train a CNN model with two \(5 \times 5\) convolution layers. The first layer has 20 output channels and the second has 50, with each layer followed by \(2 \times 2\) max pooling. 

\textbf{FashionMNIST \citep{xiao2017fashion}:} a ten-class  image classification task.
    This dataset consists of \(60\mathrm{K}\) images of \(10\) fashion categories, each \(28 \times 28\) in size. It is split into 50,000 training images and 10,000 testing images, with each class having $6,000$ images.  We train a CNN model with two \(5 \times 5\) convolution layers, where the first layer is equipped with 16 output channels and the second with 32. Following each convolution layer is a \(2 \times 2\) max pooling. The batch-size is 256.

 \textbf{CIFAR-10 \citep{krizhevsky2009learning}:} a ten-class image classification task.
    Contains $60\mathrm{K}$ $32 \times 32$ color images in 10 classes, with each class having $6,000$ images. The dataset is split into $50,000$ training images and $10,000$ test images. The CNN model has two \(5 \times 5\)  convolution layers with 6 and 16 output channels, respectively. Each layer follows a \(2 \times 2\) max pooling.

 \textbf{CIFAR-100 \citep{krizhevsky2009learning}:} a twenty-class  image classification task. This dataset includes \(60\mathrm{K}\) \(32 \times 32\) color images across \(100\) fine classes, with 50,000 images for training and 10,000 for testing. Each fine class comprises 500 training images and 100 test images. The 100 classes are divided into 20 mutually exclusive super-classes. In this classification task, we employ 20 coarse labels due to an insufficient amount of data in each fine label. Due to dataset similarity, the model is set up in the same manner as the one employed for CIFAR-10.

In particular, we apply optimizer Adam \citep{kingma2014adam} with $0.9$ momentum rate to optimize the model, and use BatchNormalization \citep{ioffe2015batch} for regularization. 

\subsection{Additional Performance Results}
\label{appendix:performance_result_using_other_metrics}

In this subsection, we evaluate the heterogeneous efforts using an alternate non-iid metric (the number of classes).

Note that many works prefer using the number of classes as a non-iid metric for data partitioning \citep{yang2022anarchic}, that is, using the number of classes $p$ within each client's local dataset, even though this metric may lack flexibility in adjusting the degree of non-iid.  Ideally, agents with local datasets containing more classes are expected to achieve better predictive performance.
Here, to illustrate the heterogeneous efforts, we also utilize the number of classes (typical) as a heterogeneity metric, despite its rigidity in altering non-iid degrees.
The corresponding results are shown in Figure \ref{fig:non-iid_rebuttal_classes} and Figure \ref{fig:performance_gap_rebuttal_class}, which align with the previous findings in Section \ref{subsec: performance_results_non_iid}.

\begin{figure}[H]
    \centering
    \includegraphics[width=1\textwidth]{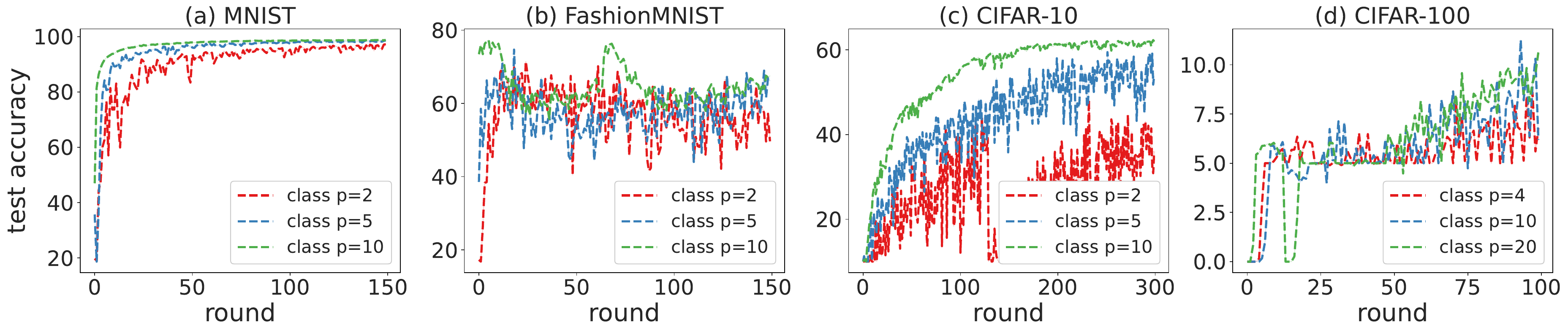}
    \vspace{-2ex}
    \caption{(\textbf{The number of classes}) FL training process under different non-iid degrees. }
    \label{fig:non-iid_rebuttal_classes}
\end{figure}

\begin{figure}[H]
    \centering
    \includegraphics[width=1\textwidth]{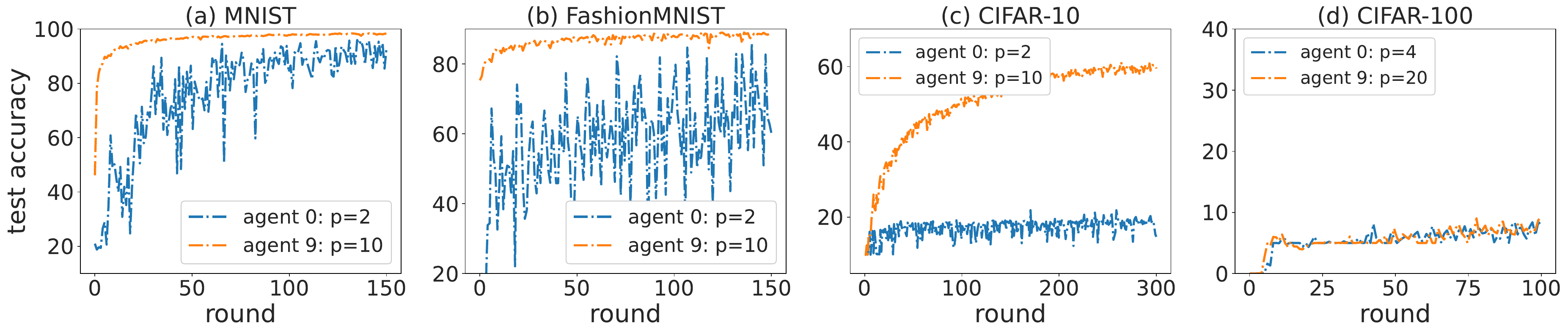}
    \vspace{-2ex}
    \caption{ (\textbf{The number of classes}) Performance comparison with peers. In default, the non-iid degree is 0.5. }
    \label{fig:performance_gap_rebuttal_class}
\end{figure}

\subsection{Scoring function settings}
\label{appendix:scoring_function}

For linear functions, we have
\begin{equation*}
    f(e_k,e_{k'})  = \kappa \frac{Q}{\Phi \delta_k^2(e_k) + \Phi \delta_{k'}^2(e_{k'}) + \Upsilon }
\end{equation*} 
According to $\partial u_k(e_k) /\partial \delta_k(e_k) = \partial f(e_k, e_{k'})/\partial \delta_k(e_k) + c = 0$, we can derive the optimal effort level by addressing the following equation,
\begin{equation*}
\label{eq:delta_linear-function}
     \delta_k^4 + 2 ( \delta_{k'}^2 + \frac{\Upsilon}{\Phi}) \delta_k^2 - 2 \frac{ \kappa Q}{c \Phi} \delta_k + \frac{(\Phi\delta_{k'}^2 + \Upsilon)^2}{\Phi} = 0
\end{equation*}
For brevity, we use $\delta_k$ to represent $\delta_k(e_k)$ as a decision variable. Here, $\delta_{k'}$ can be regarded as a constant.
Then, for logarithmic functions, we define it as
\begin{equation*}
    f(e_k,e_{k'}) = \log( \frac{Q}{\Phi \delta_k^2(e_k) + \Phi \delta_{k'}^2(e_{k'}) + \Upsilon })
\end{equation*}

Similarly, we can easily derive and obtain the optimal effort level by addressing Eq. (\ref{eq:delta_log-function}). For simplicity,  we take a linear function to simulate $d(\cdot)$ shown in the form of the cost function.
Here, we generally rewrite it into a general form $\delta_k \triangleq f(\delta_{k'}^2), \forall k$, to calculate the corresponding $\delta_k$ for agents.
\begin{equation}
\label{eq:delta_log-function}
    \delta_k^2 - \frac{2}{c} \delta_k + \delta_{k'}^2 + \frac{\Upsilon}{\Phi} = 0 
 \quad \Longrightarrow \quad \delta_k \triangleq f(\delta_{k'}^2) = \frac{1}{c} \pm \sqrt{\frac{1}{c^2} - \delta_{k'}^2 - \frac{\Upsilon}{\Phi}}
\end{equation}

where $\delta_{k'}$ indicates the non-iid degree of a randomly selected peer. Note that there are two results returned by $f(\delta_{k'}^2)$. Basically, we will choose the smaller one (smaller effort level).

\subsection{Hyperparameter Analysis} 
Combined with Theorem \ref{thm:performance_gap_test}, payment function mainly depends on two parameters $\Phi$ and $\Upsilon$, which both depend on learning rate $\eta$, Lipschitz constant $L$ and the upper bound value of gradient $G$. 
The detailed analysis of Lipschitz constant $L$ and upper bound value $G$ are presented below.

\textbf{Lipschitz constant.} Recall that we focus more on the image classification problem with classical cross-entropy loss. In general, the Lipschitz constant of the cross-entropy loss function is determined by the maximum and minimum probabilities of the output of the network. Specifically, for a neural network with the output probability distribution $\hat{\bm{y}}$ and true distribution $\bm{y}$, the Lipschitz constant of the cross-entropy loss function with respect to the L1-norm can be lower-bounded as 
\begin{equation*}
   L \geq \sup\{ \lVert \frac{\partial CE(\hat{\bm{y}}_n, \bm{y}_n)}{\partial \hat{\bm{y}}_n }  \rVert\ ,  \forall x_n    \}  \Longrightarrow L \geq \max_{i \in I} \{|\frac{y_i}{\hat{y}_i}|\} 
\end{equation*}
where $\{(x_n,y_n)\}_{n =1}^{N}$ indicates $N$ samples, and $i \in I $ denotes the index of possible labels. We assume that the corresponding averaged error rate of other labels is $0.01$ approximately. Consider the worse case where the ML model misclassifies one sample, which means the probability that the predicted label is the true label is equal to the average error rate. Then, we can derive the bound of the Lipschitz constant using $L \geq \max_{i\in I} \{|\frac{p_i}{q_i}|\} ={\frac{1}{0.01}} = 100$. Here, we set the default value of $L$ as 100.
Recall that the learning rate $\eta = 0.01$, and assume $\eta_t = \eta, \forall t$. Therefore, we can simplify parameter $\Phi$  as 
\begin{equation*}
        \Phi =  2 L^2  G^2 \sum_{t=0}^{E-1}(\eta_{t}^{2}(1+2\eta_{t}^2 L^2))^{t} 
         \leq 2 L^2  G^2 \sum_{t=0}^{E-1} 3\eta^2 
         = 6E G^2.
\end{equation*}

Similarly, for parameter $\Upsilon$, we have

\begin{equation*}
    \begin{split}
    \Upsilon  = \prod_{t=0}^{E-1}(1-2\eta L)^{t} \frac{2G^2L}{\mu^2} + \frac{L G^2}{2}\sum_{t=0}^{E-1}(1-2\eta L)^{t} \eta^2 = \prod_{t=0}^{E-1}(-1)^{t} \frac{2G^2 L}{\mu^2} + \frac{L \eta^2  G^2 }{2}\sum_{t=0}^{E-1}(-1)^{t} 
    \end{split}
\end{equation*}

 Here, if $E$ is even, we have $\Upsilon = \frac{2 G^2}{\mu}$; otherwise, $\Upsilon =  -\frac{2 G^2}{\mu} - \frac{\eta G^2 }{2}.$
In general, we discuss a specific case that $E$ is even, {\em i.e.}, $\Upsilon = \frac{2 G^2}{\mu}$ to guarantee $\Upsilon$ is non-negative. Here, suppose that the convex constant $\mu = 0.01$.

\textbf{Gradient bound.} Recall that $G$ is the upper bound of the gradient on a random sample shown in Assumption \ref{assumption:bounded_gradient}. 
In practice, the upper bound value of global gradient variance can be evaluated experimentally. More specifically, this value can be obtained from the norms of gradients observed in agents. Figure \ref{fig:gradient_compare_peers} presents the norm value of gradients of two randomly selected agents, with these values falling within the interval $[0,10]$, indicating that $G\in [0,10]$.
Another crucial observation is highlighted in Figure \ref{fig:gradient_compare_peers}, which demonstrates the existence of a gradient gap between agents. This can serve as an alternative metric for developing the payment function for FL platforms.

\begin{figure}[H]
    \centering
    \includegraphics[width=1\textwidth]{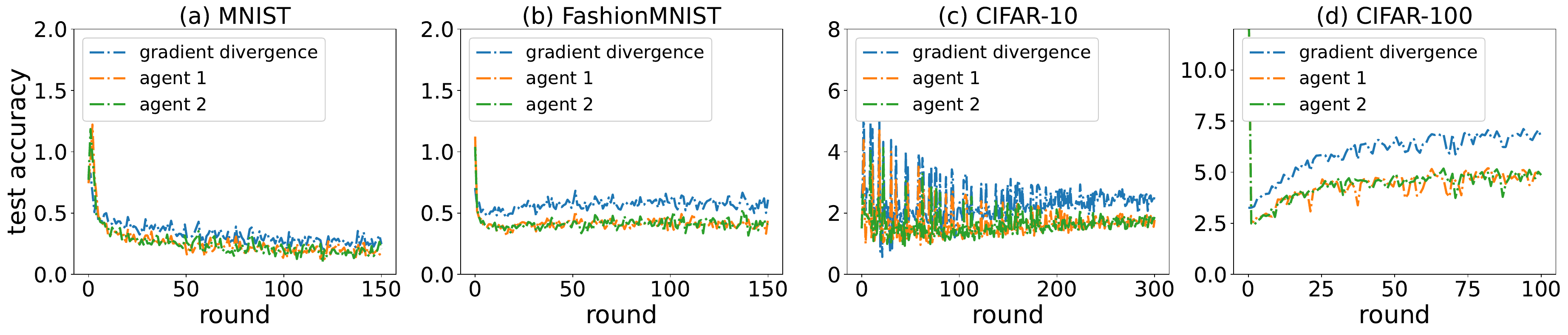}
    \vspace{-4ex}
    \caption{Gradient divergence with peers. The non-iid degree is 0.5. }
    	\label{fig:gradient_compare_peers}
\end{figure}

Based on above parameter analysis, we finally obtain that $\Phi \in [300, 30000]$ and $\Upsilon \in [200, 20000]$, respectively. Due to the Ninety-ninety rule, we set the  effort-distance function $\delta_k(e_k)$ as an exponential function here: $\delta_k(e_k) = \exp{(-e_k)}$.

\end{document}